\icmltitlerunning{Variable Elimination in the Fourier Domain}
\begin{document} 

\twocolumn[
\icmltitle{Variable Elimination in the Fourier Domain}

\icmlauthor{Yexiang Xue}{yexiang@cs.cornell.edu}
\icmladdress{Cornell University, Ithaca, NY, 14853, USA}
\icmlauthor{Stefano Ermon}{ermon@cs.stanford.edu}
\icmladdress{Stanford University, Stanford, CA, 94305, USA}
\icmlauthor{Ronan Le Bras, Carla P. Gomes, Bart Selman}{\{lebras,gomes,selman\}@cs.cornell.edu}
\icmladdress{Cornell University, Ithaca, NY, 14853, USA}

\icmlkeywords{Fourier Representation, Variable Elimination, Probabilistic Inference, Graphical Models}

\vskip 0.3in
]

\def\edkl{{\sc ed-kl}}
\def\edbp{{\sc ed-bp}}

\def\clone(#1){\hat{#1}}
\def\edse(#1){{{\it SE}(#1)}}
\def\edpm(#1){{{\it PM}(\clone(#1))}}


\def\real{{\mathbb{R}}}

\def\gm{{\mathcal{M}}}
\def\gmp{{\mathcal{M}^\prime}}
\def\gme{{\mathcal{E}}}
\def\gmep{{\mathcal{E}^\prime}}

\def\bn{{\mathcal{N}}}
\def\bnp{{\mathcal{N}^\prime}}

\def\n(#1){\bar{#1}}
\def\u{{\bf u}}
\def\U{{\bf U}}
\def\v{{\bf v}}
\def\V{{\bf V}}
\def\x{{\bf x}}
\def\X{{\bf X}}
\def\y{{\bf y}}
\def\Y{{\bf Y}}
\def\z{{\bf z}}
\def\Z{{\bf Z}}
\def\e{{\bf e}}
\def\E{{\bf E}}
\def\ep{{\bf e^\prime}}
\def\Ep{{\bf E^\prime}}

\def\pr{{\it Pr}}
\def\prp{{\it Pr^\prime}}
\def\prh{\widehat{\it Pr}}

\def\bel{{\it BEL}}
\def\mi{{\it MI}}
\def\ent{{\it ENT}}
\def\kl{{\it KL}}
\def\ind{{\it IND}}

\def\Normal{{\mathcal{N}}}
\def\mean{{\mathbb{E}}}

\newcommand\name[1]{\ensuremath{\mathsf{#1}}}
\def\true{\name{true}}
\def\false{\name{false}}


\def\NP{{\mathrm{NP}}}
\def\PP{{\mathrm{PP}}}
\def\NPPP{{\NP^{\PP}}}

\newtheorem{theorem}{Theorem}
\newtheorem{corollary}{Corollary}
\newtheorem{lemma}{Lemma}
\newtheorem{definition}{Definition}
\newtheorem{proposition}{Proposition}
\newtheorem{claim}{Claim}
\newtheorem{condition}{Condition}
\newtheorem{conjecture}{Conjecture}

\begin{abstract} 
The ability to represent complex high dimensional probability
distributions in a compact form is one of the key insights 
in the field of graphical models. Factored representations are ubiquitous in machine learning and lead to
major computational advantages. 
We explore a different type of compact
representation based on \textit{discrete Fourier representations}, complementing the classical approach based on conditional independencies. 
%
%
We show that a large class of probabilistic graphical models have a compact
Fourier representation. This theoretical result opens up an entirely
new way of approximating a probability distribution.  We demonstrate
the significance of this approach by applying it to the variable elimination
algorithm.  Compared with the traditional bucket representation and
other approximate inference algorithms, we obtain significant improvements.


\end{abstract}


\section{Introduction}


Probabilistic inference is a key computational challenge in
statistical machine learning and artificial intelligence. Inference
methods have a wide range of applications, from learning models to
making predictions and informing decision-making using statistical
models. Unfortunately, the inference problem is computationally
intractable, and standard exact inference algorithms, such as variable
elimination and junction tree algorithms have worst-case exponential
complexity.
\begin{figure}[t]
  \centering
  \includegraphics[width=0.7\linewidth]{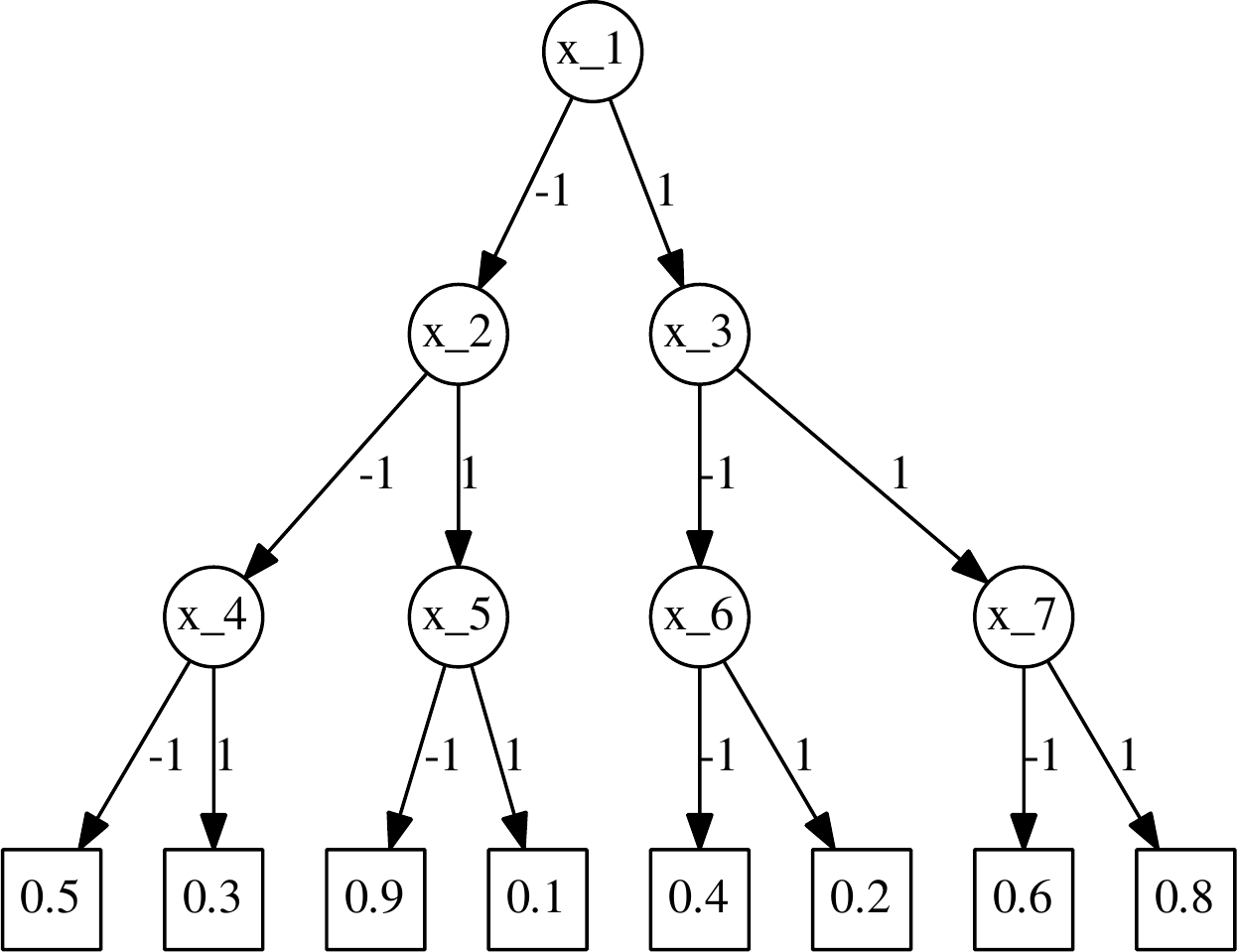}
  \caption{An example of a decision tree representing a function $f:
    \{x_1, \dots, x_7\} \rightarrow \mathcal{R}^+$.}
  \label{fig:decisiontree}
  \vskip -0.5cm
\end{figure}

The ability to represent complex high dimensional probability distributions in a compact form is perhaps the most important insight in the field of graphical models. 
The fundamental idea is to exploit (conditional) independencies between the
variables to achieve compact \emph{factored} representations, where a complex global model is represented as a product of simpler, local models. 
Similar ideas have been considered in the analysis of Boolean functions and logical forms \cite{Dechter97MiniBucket}, as well as in physics with low rank tensor decompositions and matrix product states representations \cite{Jordan1999Variational,Linden2003CollaborativeFiltering,Sontag08tightenLP,Friesen2015RecursiveDecomp}. 

Compact representations are also key for the development of efficient inference algorithms, including message-passing ones.
Efficient algorithms can be developed when messages representing the interaction among many
variables can be decomposed or approximated with the product of several smaller messages, each
involving a subset of the original variables. 
Numerous approximate and exact inference algorithms are based on this idea \cite{1993ADD,Flerova2011MinibucketMomentMatching,MateescuKGD10,GogateD13StructureBP,WainwrightJW03,Darwiche2002compilation,IhlerFDO12,HazanJ12}.  

Conditional independence (and related factorizations) is not the only type of structure that can be exploited to achieve compactness. For example, consider the weighted decision tree in
Figure~\ref{fig:decisiontree}. No two variables
in the probability distribution in
Figure~\ref{fig:decisiontree} are independent of each other. The probability distribution cannot be represented by the product of simpler terms of disjoint domains and hence we cannot take advantage of independencies. The full probability table needs $2^7=128$ entries to be
represented exactly.
Nevertheless, this table can be described exactly by 8 simple decision
rules, each corresponding to a path from the root to a leaf in the tree.


In this paper, we explore a novel way to exploit compact representations of high-dimensional probability tables in (approximate) probabilistic inference algorithms.
Our approach is based on a (discrete) Fourier representation of the tables, which can be interpreted as a change of basis. Crucially, tables that are dense in the canonical basis can have a sparse Fourier representation. In particular, under certain conditions, probability tables can be represented (or well approximated) using a small number of Fourier coefficients.
%
%
The Fourier representation has found numerous recent applications, including modeling stochastic processes \cite{rogers2000evaluating,abbring2012likelihood}, manifolds \cite{cohen2015harmonicmanifold}, and permutations \cite{HuangGG09FourierPermutation}. 
Our approach is based on Fourier representation on Boolean functions, which has found tremendous success in PAC learning \cite{O'Donnell2008,Mansour1994,Blum1998,BuchmanSMPF12FourierLearning}, but these ideas have not been fully exploited in the fields of probabilistic inference and graphical models.


In general, a factor over $n$ Boolean variables requires
$O(2^n)$ entries to be specified, and similarly the corresponding
Fourier representation is dense in general, i.e., it has $O(2^n)$
non-zero coefficients.  However, a rather surprising fact which was
first discovered by Linial \cite{Linial1993LowOrder} is that factors
corresponding to fairly general classes of logical forms admit a
compact Fourier representation.  Linial discovered that formulas in
Conjunctive Normal Form (CNF) and Disjunctive Normal Form (DNF) with
bounded width (the number of variables in each clause) have compact
Fourier representations. 
%

In this paper, we introduce a novel approach for using approximate
Fourier representations in the field of probabilistic inference.
We generalize the work of Linial to the case of probability distributions (the weighted case where the entries are not necessarily 0 or 1), showing
that a large class of probabilistic graphical models have compact
Fourier representation. 
%
The proof extends the Hastad's Switching Lemma~\cite{Hastad1987}
to the weighted case.
At a high level, a compact Fourier representation often means the
weighted probabilistic distribution can be captured by a small set of
critical decision rules. Hence, this notion is closely related to
decision trees with bounded depth.


Sparse (low-degree) Fourier representations provide an entirely new way of
approximating a probability distribution.
We demonstrate the power of this idea by applying it to the variable
elimination algorithm.
%
Despite that it is conceptually simple, we show in
Table~\ref{tab:uai2010} that the variable elimination algorithm with
Fourier representation outperforms Minibucket, Belief Propagation and
MCMC, and is competitive and even outperforms an award winning solver 
HAK on several categories of the UAI Inference Challenge.




\section{Preliminaries}

\subsection{Inference in Graphical Models}

We consider a Boolean graphical model over $N$ Boolean variables
$\{x_1, x_2, \ldots, x_N\}$. 
We use bold typed variables to represent a vector of variables.  For
example, the vector of all Boolean variables $\x$ is written as $\x =
(x_1, x_2, \ldots, x_N)^T$.
We also use $\x_S$ to represent the image of vector $\x$
\textit{projected} onto a subset of variables: $\x_S = (x_{i_1},
x_{i_2}, \ldots, x_{i_k})^T$ where $S = \{{i_1}, \ldots, {i_k}\}$. 
A probabilistic graphical model is defined as:
$$Pr(\x) = \frac{1}{Z} f(\x) = \frac{1}{Z} \prod_{i=1}^K \psi_i(\x_{S_i}).$$
where each $\psi_i: \{-1,1\}^{|S_i|} \rightarrow \real^+$ is called a
\textit{factor}, and is a function that depends on a subset of
variables whose indices are in ${S_i}$.
$Z = \sum_{\x} \prod_{i=1}^K \psi_i(\x_{S_i})$ is the normalization
factor, and is often called the \textit{partition function}.
In this paper, we will use $-1$ and $1$ to represent $\false$ 
and $\true$. 
We consider two key probabilistic inference tasks: the computation of the partition function $Z$ (\texttt{PR}) and marginal probabilities  $Pr(e) = \frac{1}{Z} \sum_{\x \sim e} f(\x)$
(\texttt{Marginal}), in which $\x \sim e$ means that $\x$ is consistent with the evidence $e$. 

%

The Variable Elimination Algorithm is an exact algorithm to compute
marginals and the partition function for general graphical models. It starts with a variable ordering
$\pi$. In each iteration, it eliminates one variable by multiplying all
factors involving that variable, and then summing that variable out. When all
variables are eliminated, the factor remaining is a singleton, whose
value corresponds to the partition function.
The complexity of the VE algorithm depends on the size of the largest factors
generated during the elimination process, and is known to be
exponential in the tree-width~\cite{Gogate2004Treewidth}.

Detcher proposed the Mini-bucket Elimination Algorithm
\cite{Dechter97MiniBucket}, which dynamically decomposes and
approximates factors (when the domain of a product exceeds a
threshold) with the product of smaller factors during the elimination
process. Mini-bucket can provide upper and lower bounds on the
partition function.
The authors of
\cite{Rooij09FastSubsetConvolution,Smith2013InclusionExclusion}
develop fast operations similar to the Fast Fourier transformation,
and use it to speed up the exact inference. Their approaches do not
approximate the probability distribution, which is a key difference from
 this paper.

\subsection{Hadamard-Fourier Transformation}

  

Hadamard-Fourier transformation has attracted a lot of attention in
PAC Learning Theory. 
%
Table~\ref{tab:small_example} provides an example where a
function $\phi(x, y)$ is transformed into its Fourier representation.
The transformation works by writing $\phi(x,y)$ using
interpolation, then re-arranging the terms to get a canonical term.
The example can be generalized, and it can be shown that any function defined on a Boolean hypercube has an equivalent Fourier representation. 
\begin{theorem}
  (Hadamard-Fourier Transformation) Every $f: \{-1, 1\}^n \rightarrow \real$ can be uniquely expressed as a multilinear polynomial,
  $$f(\x) = \sum_{S \subseteq [n]} c_S \prod_{i \in S} x_i.$$
  where each $c_S \in \real$. This polynomial is referred to as the Hadamard-Fourier expansion of $f$.
\end{theorem}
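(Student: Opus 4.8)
The plan is to recognize this as a statement about a basis for the vector space of all real functions on the Boolean cube, and to prove existence and uniqueness together by a dimension-counting argument. The set $V = \{f : \{-1,1\}^n \to \real\}$ is a real vector space under pointwise addition and scaling, and since a function is determined by its $2^n$ values on the $2^n$ points of $\{-1,1\}^n$, we have $\dim V = 2^n$. There are exactly $2^n$ subsets $S \subseteq [n]$, hence exactly $2^n$ candidate monomials $\chi_S(\x) := \prod_{i \in S} x_i$. If I can show these $2^n$ monomials are linearly independent, then they form a basis of $V$, which yields simultaneously that every $f$ is a linear combination of them (existence) and that the combination is unique.

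For existence on its own, I would follow the interpolation idea hinted at in the preliminaries. For each point $\mathbf{a} \in \{-1,1\}^n$ the indicator function can be written as $\mathbf{1}\{\x = \mathbf{a}\} = \prod_{i=1}^n \tfrac{1 + a_i x_i}{2}$, since each factor equals $1$ when $x_i = a_i$ and $0$ otherwise. Then $f(\x) = \sum_{\mathbf{a}} f(\mathbf{a}) \prod_{i} \tfrac{1 + a_i x_i}{2}$, and expanding each product yields a polynomial in which no variable appears to a power higher than one — the key point being that every factor is affine in $x_i$, so the product is automatically multilinear. Collecting like monomials gives the claimed form $\sum_S c_S \chi_S$.

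For linear independence (and hence uniqueness) the cleanest tool is orthogonality under the inner product $\langle g, h \rangle := \mathbb{E}_{\x}[g(\x) h(\x)]$, where $\x$ is uniform on $\{-1,1\}^n$. The two facts I would establish are: (i) on the cube $x_i^2 = 1$, so $\chi_S \chi_T = \chi_{S \triangle T}$; and (ii) $\mathbb{E}[\chi_U] = \prod_{i \in U} \mathbb{E}[x_i] = 0$ unless $U = \emptyset$, in which case it equals $1$, using that the coordinates are independent and each $x_i$ has mean zero. Combining these gives $\langle \chi_S, \chi_T \rangle = \mathbb{E}[\chi_{S \triangle T}] = \delta_{S,T}$, so the monomials are orthonormal, in particular linearly independent. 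This also hands us an explicit formula for the coefficients, $c_S = \langle f, \chi_S \rangle = \mathbb{E}[f(\x)\chi_S(\x)]$, which reinforces uniqueness.

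There is no deep obstacle here; the result is essentially a change of basis. The only place demanding care is the interplay that makes the representation multilinear rather than an arbitrary polynomial: the relation $x_i^2 = 1$ on the cube is what both collapses the interpolation product to a multilinear form and makes the orthogonality computation go through. The other subtle bookkeeping step is matching cardinalities — confirming that the number of monomials is exactly $\dim V = 2^n$ — so that linear independence alone is enough to conclude that they span, and therefore form a basis.
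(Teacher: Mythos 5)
Your proposal is correct and its existence argument is exactly the paper's: the interpolation identity $f(\x) = \sum_{\mathbf{a}} f(\mathbf{a}) \prod_i \tfrac{1+a_i x_i}{2}$ generalizes the worked example in Table~1, and expanding the affine factors yields the multilinear form. The orthonormality computation $\langle \chi_S, \chi_T\rangle = \delta_{S,T}$ that you use for uniqueness is a clean supplement to what the paper only gestures at via the invertibility of the Hadamard matrix $H_n$, and it also delivers the coefficient formula $c_S = \mathbb{E}[f\chi_S]$ used implicitly throughout Section~3.
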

Here, $[n]$ is the power set of $\{1,\ldots,n\}$. 
Following standard notation, we will write $\hat{f}(S)$ to denote the coefficient $c_S$ and $\chi_S(\x)$ for the basis function $\prod_{i \in S} x_i$.  As a special case, $\chi_\emptyset = 1$. Notice these basis functions are parity functions. We also call $\hat{f}(S)$ a degree-$k$ coefficient of $f$ iff $|S|=k$. 
In our example in Table~\ref{tab:small_example}, the coefficient for basis function $xy$ is $\hat{\phi}(\{x, y\}) = \frac{1}{4} (\phi_1 - \phi_2 - \phi_3 + \phi_4)$, which is a degree-2 coefficient.

We re-iterate some classical results on Fourier expansion.  
First, as with the classical (inverse) Fast Fourier Transformation
(FFT) in the continuous domain, there are similar divide-and-conquer
algorithms ($\mathtt{FFT}$ and $\mathtt{invFFT}$) which connect the
table representation of $f$ (e.g., upper left table,
Table~\ref{tab:small_example}) with its Fourier representation (e.g.,
bottom representation, Table~\ref{tab:small_example}).
Both $\mathtt{FFT}$ and $\mathtt{invFFT}$ run in time $O(n \cdot 2^n)$
for a function involving $n$ variables.
In fact, the length $2^n$ vector of all function values and the length
$2^n$ vector of Fourier coefficients are connected by a $2^n$-by-$2^n$
matrix $H_n$, which is often called the $n$-th Hadamard-Fourier matrix.
%
In addition, we have the Parseval's identity for Boolean Functions as well: $\mathbb{E}_{\x}[f(\x)^2] = \sum_S \hat{f}(S)^2$.
  



\begin{table}[t]
  \centering
  \begin{minipage}{0.4\linewidth}
    \centering
    \begin{tabular}{cc|c}
      \hline
      $x$ & $y$ & $\phi(x, y)$\\
      \hline
      -1 & -1 & $\phi_1$\\
      -1 & 1 & $\phi_2$\\
      1 & -1 & $\phi_3$\\
      1 & 1 & $\phi_4$\\
      \hline
    \end{tabular}
  \end{minipage}
  \begin{minipage}{0.55\linewidth}
    \begin{align*}
                 \phi(x, y) =
                 & \frac{1-x}{2} \cdot \frac{1-y}{2} \cdot \phi_1 +\\
                 &  \frac{1-x}{2} \cdot \frac{1+y}{2} \cdot \phi_2+ \\
                 &  \frac{1+x}{2} \cdot \frac{1-y}{2} \cdot \phi_3 +\\
                 &  \frac{1+x}{2} \cdot \frac{1+y}{2} \cdot \phi_4.
    \end{align*}
  \end{minipage}  
  \vspace{-0.1in}
  \begin{minipage}{0.8\linewidth}
    \begin{align*}
                 &\phi(x, y) =\\
      &\frac{1}{4} (\phi_1 + \phi_2 + \phi_3 + \phi_4) + \frac{1}{4} (-\phi_1 - \phi_2 + \phi_3 + \phi_4)x\\ 
                 &+\frac{1}{4} (-\phi_1 + \phi_2 - \phi_3 + \phi_4)y + \frac{1}{4} (\phi_1 - \phi_2 - \phi_3 + \phi_4)xy.
    \end{align*}
  \end{minipage}  
  \caption{(Upper Left) Function $\phi:\{-1, 1\}^2 \rightarrow \real$ is represented in a table. (Upper Right) $\phi$ is re-written using interpolation. (Bottom) The terms of the upper-right equation are re-arranged, which yields the Fourier expansion of function $\phi$.}
  \label{tab:small_example}
\end{table}


\section{Low Degree Concentration of Fourier Coefficients}
\label{sec:theory}

Fourier expansion replaces the table
representation of a weighted function with its Fourier
coefficients.
For a function with $n$ Boolean variables, the complete table
representation requires $2^n$ entries, and so does the full Fourier
expansion.
Interestingly, many natural
functions can be approximated well with only a few Fourier 
coefficients. 
This raises a natural question: 
\emph{what type of functions can be well
approximated with a compact Fourier expansion?}

We first discuss which functions can be represented \textit{exactly} in
the Fourier domain with coefficients up to degree $d$.
To answer this question, we show a tight connection between Fourier
representations with bounded degree and decision trees with bounded
depth.
%
%
A decision tree for a weighted function $f: \{-1, 1\}^n \rightarrow
\real$ is a tree in which each inner node is labelled with one
variable, and has two out-going edges, one labelled with $-1$, and other
one with $1$. The leaf nodes are labelled with real
values.
When evaluating the value on an input $\x = x_1 x_2 \ldots x_n$, we start
from the root node, and follow the corresponding out-going edges by
inspecting the value of one variable at each step, until we reach one
of the leaf nodes. The value at the leaf node is the output for
$f(\x)$.
The \textit{depth} of the decision tree is defined as the longest path
from the root node to one of the leaf nodes.
Figure~\ref{fig:decisiontree} provides a decision tree representation
for a weighted Boolean function.
One classical result \cite{O'Donnell2008} states that if a
function can be captured by a decision tree with depth $d$, then it
can be represented with Fourier coefficients up to degree $d$:
\begin{theorem}
  Suppose $f: \{-1, 1\}^n \rightarrow \real$ can be
  represented by a decision tree of depth $d$, then all the
  coefficients whose degree are larger than $d$ is zero in $f$'s Fourier
  expansion: $\hat{f}(S)=0$ for all $S$ such that $|S| > d$.
  \label{th:decisiontree}
\end{theorem}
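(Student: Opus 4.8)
The plan is to write $f$ explicitly as a multilinear polynomial in which every monomial has degree at most $d$, by summing contributions from the leaves of the decision tree, and then to appeal to the uniqueness of the Hadamard-Fourier expansion to conclude that no coefficient of degree exceeding $d$ can appear.

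First I would associate to each leaf $\ell$ the unique root-to-leaf path reaching it. Such a path queries a set of variables $x_{i_1}, \dots, x_{i_k}$ and fixes them to values $b_1, \dots, b_k \in \{-1,1\}$, where $k$ is the length of the path and hence $k \le d$. The key observation is that the indicator function testing whether an input $\x$ follows exactly this path can be written as
$$\mathbf{1}_\ell(\x) = \prod_{j=1}^{k} \frac{1 + b_j x_{i_j}}{2},$$
since each factor equals $1$ when $x_{i_j} = b_j$ and $0$ otherwise. Because evaluating $f$ on any input amounts to following exactly one root-to-leaf path, I can decompose
$$f(\x) = \sum_{\ell} v_\ell \, \mathbf{1}_\ell(\x),$$
where $v_\ell \in \real$ is the value stored at leaf $\ell$.

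Next I would expand each indicator. Multiplying out $\prod_{j=1}^{k} \frac{1 + b_j x_{i_j}}{2}$ yields a multilinear polynomial in $x_{i_1}, \dots, x_{i_k}$, and every monomial appearing is a product of a subset of these $k$ variables; consequently each such monomial has degree at most $k \le d$. Summing over all leaves, $f$ is itself expressed as a multilinear polynomial in which every monomial has degree at most $d$.

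Finally, the Hadamard-Fourier transformation guarantees that the multilinear polynomial representing $f$ is \emph{unique}, so the polynomial just constructed must coincide with the Hadamard-Fourier expansion of $f$. Since this representation contains only basis functions $\chi_S$ with $|S| \le d$, we conclude $\hat{f}(S) = 0$ for every $S$ with $|S| > d$. I expect the only real subtlety to be the degree bookkeeping: one must track that the degree of each leaf's indicator is governed by the length of its path (at most $d$) rather than by the total number of variables $n$, and that the sum over leaves introduces no higher-degree term, since none of the summands ever contains a monomial of degree exceeding $d$.
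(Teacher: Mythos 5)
Your proof is correct and is exactly the standard argument: the paper states this result as a classical fact cited from the analysis-of-Boolean-functions literature without reproducing a proof, and the path-indicator decomposition $f(\x) = \sum_{\ell} v_\ell \prod_{j}\frac{1+b_j x_{i_j}}{2}$ followed by an appeal to uniqueness of the multilinear expansion is precisely the textbook proof being invoked. No gaps; the degree bookkeeping you flag is handled correctly since each indicator expands into monomials over a subset of the at most $d$ variables queried on its path.
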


We can also provide the converse of Theorem~\ref{th:decisiontree}:
\begin{theorem}
  Suppose $f: \{-1, 1\}^n \rightarrow \real$ can be
  represented by a Fourier expansion with non-zero coefficients up to degree $d$, then $f$ can be represented by the sum of several decision trees, each of which has depth at most $d$.
  \label{th:decisiontreeinverse}
\end{theorem}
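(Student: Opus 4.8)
The plan is to decompose $f$ term by term in its Fourier expansion and observe that each individual Fourier monomial is, by itself, a shallow decision tree. By hypothesis $\hat{f}(S) = 0$ whenever $|S| > d$, so
$$f(\x) = \sum_{S : |S| \le d} \hat{f}(S)\, \chi_S(\x),$$
a finite sum of scaled parity functions $\chi_S(\x) = \prod_{i \in S} x_i$ with $|S| \le d$. It therefore suffices to exhibit, for each such $S$, a single decision tree $T_S$ of depth at most $d$ that computes the term $\hat{f}(S)\,\chi_S$ exactly; then $f = \sum_S T_S$ is the desired sum of depth-$\le d$ decision trees, one per non-zero coefficient.

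First I would construct $T_S$ for a fixed $S$ with $|S| = k \le d$. Pick any ordering $i_1, \dots, i_k$ of the indices in $S$ and build a complete binary tree that queries $x_{i_1}, \dots, x_{i_k}$ along every root-to-leaf path, branching on the value $\pm 1$ of each variable. This tree has depth exactly $k \le d$, and each of its $2^k$ leaves corresponds to a fixed assignment $b_1, \dots, b_k \in \{-1,1\}$ of the queried variables. At that leaf I assign the value $\hat{f}(S)\prod_{j=1}^{k} b_j$. By construction, on every input $\x$ this tree outputs $\hat{f}(S)\prod_{i \in S} x_i = \hat{f}(S)\,\chi_S(\x)$, so $T_S$ realizes the term exactly. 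The degenerate case $S = \emptyset$ is the depth-$0$ tree consisting of a single leaf with value $\hat{f}(\emptyset)$, since $\chi_\emptyset \equiv 1$.

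Summing $T_S$ over all $S$ with $|S| \le d$ reproduces the Fourier expansion above, so $f$ is a sum of $\sum_{k=0}^{d}\binom{n}{k}$ decision trees, each of depth at most $d$, proving the claim. The construction is entirely explicit once one has the expansion from the Hadamard--Fourier Transformation theorem, and the key observation is simply that a parity $\chi_S$ is itself a depth-$|S|$ decision tree with $\pm 1$ leaves.

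I expect no serious obstacle in this direction; the only conceptual point worth flagging is why a \emph{sum} of trees is needed rather than a single one, which is the natural temptation as an exact converse to Theorem~\ref{th:decisiontree}. A single depth-$d$ tree does have all coefficients of degree $\le d$, but the converse fails at the level of a single tree by a dimension count: a depth-$d$ tree has at most $2^d$ leaves and hence lies in a family of bounded effective dimension, whereas the space of all degree-$\le d$ functions has dimension $\sum_{k \le d}\binom{n}{k}$, which is far larger for $n \gg d$. The term-by-term construction sidesteps this precisely by spending one tree per non-zero monomial, and it is this relaxation to a sum that makes the statement provable.
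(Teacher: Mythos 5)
Your proof is correct: writing $f=\sum_{|S|\le d}\hat f(S)\chi_S(\x)$ and realizing each scaled parity $\hat f(S)\chi_S$ as a complete depth-$|S|$ decision tree over the variables in $S$ (leaf value $\hat f(S)\prod_j b_j$, and a single leaf for $S=\emptyset$) is exactly the standard argument for this converse, and the paper's (omitted) proof is the same one-tree-per-monomial construction. Your closing remark on why a \emph{sum} of trees is unavoidable is a correct aside and not needed for the claim.
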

Theorem~\ref{th:decisiontree} and
Theorem~\ref{th:decisiontreeinverse} provide a tight connection
between the Fourier expansion and the decision trees. 
This is also part of the reason why the Fourier representation is a
powerful tool in PAC learning.
%
%
Notice that the Fourier representation complements the classical way
of approximating weighted functions exploiting
independencies.
%
To see this, suppose there is a decision tree of the same structure as
in Figure~\ref{fig:decisiontree}, but has 
depth $d$. According to Theorem~\ref{th:decisiontree}, it can be
represented exactly with Fourier coefficients up to degree $d$.
In this specific example, the number of non-zero Fourier coefficients is
$O(2^{2d})$.
Nonetheless, no two variables in figure~\ref{fig:decisiontree} are
independent with each other.  Therefore, it's not possible to
decompose this factor into a product of smaller factors with disjoint
domains (exploiting independencies). Notice that the full table
representation of this factor has $O(2^{2^d})$ entries, because
different nodes in the decision tree have different variables and there
are $O(2^{d})$ variables in total in this example.

If we are willing to accept an approximate representation, low degree 
Fourier coefficients can capture an even wider class of functions. 
%
%
We follow the standard notion of $\epsilon$-concentration:
\begin{definition}
  The Fourier spectrum of $f: \{ -1,1\}^n \rightarrow \real$ is
  $\epsilon$-concentrated on degree up to $k$ if and only if
  $\mathcal{W}_{>k}[f] = \sum_{S \subseteq [n], |S| > k} \hat{f}(S)^2
  < \epsilon$.
\end{definition}

We say a CNF (DNF) formula has  bounded width $w$ if and only
if every clause (term) of the CNF (DNF) has at most $w$ literals. 
In the literatures outside of PAC Learning, this is also referred to as
a CNF (DNF) with clause (term) length $w$.
Linial \cite{Linial1993LowOrder} proved the following result: 
\begin{theorem}
  [Linial] Suppose $f: \{-1,1\}^n \rightarrow \{-1,1\}$ is computable
  by a DNF (or CNF) of width $w$, then $f$'s Fourier spectrum is
  $\epsilon$-concentrated on degree up to $O(w \log(1/\epsilon))$.
\end{theorem}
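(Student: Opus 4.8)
The plan is to prove the DNF case; the CNF case then follows by applying it to $-f$, since negation only flips the signs of the Fourier coefficients and leaves each $\hat{f}(S)^2$ — and hence the spectrum's concentration — unchanged. The engine of the argument is Hastad's Switching Lemma combined with a random restriction $\rho$ that independently keeps each variable \emph{free} with probability $p$ (to be fixed later) and otherwise sets it to $\pm 1$ uniformly; write $J$ for the random set of surviving variables and $f|_\rho$ for the restricted function on them. First I would record how restriction acts on the spectrum: expressing $f|_\rho$ as a multilinear polynomial in the free variables and averaging over the assignment to the fixed coordinates (with $J$ held fixed), orthogonality of the characters (i.e.\ Parseval) gives, for each $U \subseteq J$, that $\mathbb{E}\big[\widehat{f|_\rho}(U)^2 \mid J\big] = \sum_{S:\,S\cap J = U}\hat{f}(S)^2$. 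Summing over $|U|>m$ and then taking expectation over $J$ yields the key identity
\begin{equation*}
  \mathbb{E}_{\rho}\!\left[\mathcal{W}_{>m}[f|_\rho]\right] = \sum_{S} \hat{f}(S)^2 \,\Pr\big[|S\cap J| > m\big],
\end{equation*}
in which $|S\cap J|$ is distributed as $\mathrm{Bin}(|S|,p)$.

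Next I would bound the left-hand side from above. By the Switching Lemma applied to the width-$w$ DNF $f$, the restricted function $f|_\rho$ fails to be computable by a decision tree of depth at most $m$ with probability at most $(Cpw)^m$ for an absolute constant $C$. Whenever such a tree \emph{does} exist, Theorem~\ref{th:decisiontree} forces $\mathcal{W}_{>m}[f|_\rho]=0$; in the remaining ``bad'' event I bound $\mathcal{W}_{>m}[f|_\rho] \le \mathbb{E}_{\x}[(f|_\rho)^2] = 1$, using that $f$ is $\{-1,1\}$-valued. Hence the left-hand side is at most $(Cpw)^m$. Now I choose parameters: set $p = 1/(2Cw)$ so that $(Cpw)^m = 2^{-m}$, and restrict the right-hand sum to the sets with $|S| > k := 2m/p$. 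For such $S$ the mean of $\mathrm{Bin}(|S|,p)$ exceeds $2m$, so a standard binomial-median estimate gives $\Pr[|S\cap J|>m]\ge \tfrac12$, whence $\tfrac12\,\mathcal{W}_{>k}[f] \le 2^{-m}$, i.e.\ $\mathcal{W}_{>k}[f] \le 2^{1-m}$. Taking $m = \lceil\log_2(2/\epsilon)\rceil = O(\log(1/\epsilon))$ makes this smaller than $\epsilon$, while $k = 2m/p = O(w\log(1/\epsilon))$, which is exactly the claimed concentration degree.

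The main obstacle is the Switching Lemma itself — the statement that a random restriction collapses a bounded-width DNF to a shallow decision tree except with probability exponentially small in the depth. Its proof rests on a delicate encoding (adversary) argument and is where essentially all the difficulty concentrates; by contrast, the restriction–Fourier identity and the binomial tail calculation are routine once that tool is in hand. This is also precisely the ingredient that the paper proposes to generalize to the weighted setting.
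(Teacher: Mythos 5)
Your proof is correct and is essentially the argument the paper itself uses: the paper treats Linial's theorem as the $\eta=0$ (hence $\gamma=0$, $u=1$) special case of its machinery, and your steps --- the random-restriction identity $\mathbb{E}_{J|\z}\bigl[\widehat{f|_{J|\z}}(S)^2\bigr]=\sum_{U}\Pr\{U\cap J=S\}\,\hat f(U)^2$, the switching-lemma bound on the bad event, the Parseval bound of $1$ on that event, the binomial tail estimate for $|S\cap J|$, and the choice $p=\Theta(1/w)$, $m=\Theta(\log(1/\epsilon))$ --- correspond exactly to its Lemma~2 and the parameter selection in the proof of Theorem~\ref{th:sparse_main} (the paper uses a Chernoff bound giving a constant $4$ where you get $2$, an immaterial difference). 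No gaps.
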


Linial's result demonstrates the power of Fourier representations,
since bounded width CNF's (or DNF's) include a very rich class of
functions.
Interestingly, the bound does not depend on the number of clauses,
even though the clause-variable ratio is believed to characterize the
hardness of satisfiability problems.


As a contribution of this paper, we extend Linial's results to a
class of weighted probabilistic graphical models, which are
contractive with gap $1-\eta$ and have bounded width $w$.
To our knowledge, this extension from the
deterministic case to the probabilistic case is novel.
\begin{definition}
Suppose $f(\x): \{-1, 1\}^n \rightarrow \real^+$ is a weighted
function, we say $f(\x)$ has bounded width $w$ iff the number of variables in the domain of $f$ is no more than $w$. We say
$f(\x)$ is \textit{contractive} with gap $1-\eta$ $(0\leq \eta < 1)$ if and only
if (1) for all $\x$, $f(\x) \leq 1$; (2) $\max_{\x} f(\x) = 1$; (3)
if $f(\x_0) < 1$, then $f(\x_0) \leq \eta$. 
\end{definition}

The first and second conditions are mild restrictions.  For a
graphical model, we can always rescale each factor properly to ensure
its range is within $[0, 1]$ and the largest element is 1. The
approximation bound we are going to prove depends on the gap
$1-\eta$. 
Ideally, we want $\eta$ to be small. The class of contractive
functions with gap $1-\eta$ still captures a wide class of interesting
graphical models. For example, it captures Markov Logic Networks
\cite{Richardson2006MLN}, when the weight of each clause is large.
Notice that this is one of the possible necessary conditions we
found success in proving the weight concentration result. 
In practice, because compact Fourier representation is more about the
structure of the weighted distribution (captured by a series of
decision trees of given depth), graphical models with large $\eta$
could also have concentrated weights.
%
The main theorem we are going to prove is as follows:
\begin{theorem}
  (Main) Suppose $f(\x) = \prod_{i=1}^m f_i(\x_i)$, in which every $f_i$ is a
  contractive function with width $w$ and gap $1-\eta$,
  then $f$'s Fourier spectrum is
  $\epsilon$-concentrated on degree up to $O(w \log(1/\epsilon) \log_\eta \epsilon)$ when $\eta > 0$ and $O(w \log(1/\epsilon))$ when $\eta = 0$.
  \label{th:sparse_main}
\end{theorem}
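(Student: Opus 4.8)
The plan is to follow the random-restriction strategy that underlies Linial's theorem, but to replace the classical Switching Lemma by a weighted analogue that exploits the gap condition. Throughout I write $\mathcal{W}_{>k}[f]=\sum_{|S|>k}\hat{f}(S)^2$, and I use that each $f_i$ (hence $f$) satisfies $0\le f\le 1$, so $\|f\|_2\le 1$. I would first dispose of the case $\eta=0$: then each $f_i$ is $\{0,1\}$-valued on its $\le w$ variables, so $1-f=\bigvee_{i=1}^m \mathbb{1}[f_i=0]$ is a width-$w$ DNF. Linial's theorem applied to $1-f$ gives $\epsilon$-concentration up to degree $O(w\log(1/\epsilon))$, and since $f$ and $1-f$ differ only in the degree-$0$ coefficient this transfers to $f$.

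For $\eta>0$ I would set up the restriction reduction. Let $\rho$ be a random $\delta$-restriction (each coordinate kept free independently with probability $\delta$, the rest fixed to uniform $\pm 1$). The standard identity $\mathbb{E}_\rho[\sum_{|S|=j}\hat{f_\rho}(S)^2]=\sum_T\hat f(T)^2\binom{|T|}{j}\delta^j(1-\delta)^{|T|-j}$ together with a Chernoff bound shows that, for $d=2k/\delta$, one has $\mathcal{W}_{>d}[f]\le 2\,\mathbb{E}_\rho[\mathcal{W}_{\ge k}[f_\rho]]$. If, on a ``good'' event, $f_\rho$ is $L^2$-approximable to error $\epsilon'$ by a decision tree of depth $<k$, then Theorem~\ref{th:decisiontree} gives $\mathcal{W}_{\ge k}[f_\rho]\le(\epsilon')^2$ there, while on the ``bad'' event $\mathcal{W}_{\ge k}[f_\rho]\le\|f_\rho\|_2^2\le 1$. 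Hence $\mathcal{W}_{>d}[f]\le 2\,\Pr_\rho[\text{bad}]+2(\epsilon')^2$, and the whole task reduces to bounding $\Pr_\rho[\text{bad}]$ with $\delta=\Theta(1/w)$.

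The gap now enters in two ways, and this is the crux (the weighted Switching Lemma). First, truncation: at any input, if more than $t$ factors are violated ($f_i<1$) then $f\le\eta^{t}$, so choosing $t=O(\log_\eta\epsilon)$ makes the $L^2$ mass of the ``$>t$ violations'' region at most $\epsilon$; it therefore suffices to approximate $f_\rho$ on the region where at most $t$ factors are violated. Second, on that region $f$ is essentially Boolean: each factor is either exactly $1$ or contributes a multiplicative weight in $[0,\eta]$, so the collection of violated factors plays the role of the satisfied terms of a width-$w$ DNF. I would then extend H\aa stad's canonical-decision-tree / encoding argument~\cite{Hastad1987} to this weighted setting, building the tree by repeatedly querying the $\le w$ variables of a factor not yet frozen to value $1$, resolving one violated factor per block. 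The compression argument bounds the failure probability by $(C\delta w)^{k}$ at decision-tree depth $O(wk)$, and at most $t=O(\log_\eta\epsilon)$ such resolutions can matter before the residual product drops below $\epsilon$; the depths add, so with $k=O(\log(1/\epsilon))$ the good event holds except with probability $\le\epsilon$ at degree $d=O(wk\cdot t)=O\!\left(w\log(1/\epsilon)\log_\eta\epsilon\right)$, which is exactly the claimed bound.

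The hard part is the extension in the previous paragraph: proving the Switching Lemma with real-valued factors. In H\aa stad's proof each clause is cleanly satisfied or falsified, and this dichotomy drives the encoding/counting step; in the weighted case a factor is either exactly $1$ or only ``mostly off'' (value in $[0,\eta]$), so I must show that the residual $\le\eta$ contributions accumulate \emph{multiplicatively} and are annihilated by the truncation, and that the compression bookkeeping still closes when leaves carry real numbers rather than bits. Making the two gap-dependent quantities --- the truncation depth $\log_\eta\epsilon$ and the per-block failure probability --- combine into a single clean degree bound is the delicate point, and is precisely where the extra factor $\log_\eta\epsilon$, absent from Linial's deterministic bound, is forced to appear.
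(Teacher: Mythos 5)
Your proposal follows essentially the same route as the paper: a $\delta$-random restriction, the identity $\mathbb{E}_{J|\z}[\hat{f}|_{J|\z}(S)^2]=\sum_U \Pr\{U\cap J=S\}\hat f(U)^2$ plus a Chernoff bound to convert decision-tree approximability of restrictions into global low-degree concentration, and a weighted switching lemma whose gap-dependent parameter $u=\lceil\log_\eta\gamma\rceil+1$ is exactly your ``at most $O(\log_\eta\epsilon)$ violated factors before the product drops below the truncation threshold'' observation. The paper likewise defers the encoding/compression proof of the weighted switching lemma to its supplementary material, so your sketch of that step is at the same level of detail as the main text; the only cosmetic differences are that you handle $\eta=0$ by reducing directly to Linial's theorem (the paper treats it as the $u=1$ case of the same lemma) and that you work with $L^2$ rather than sup-norm approximation by the decision tree.
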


The proof of theorem~\ref{th:sparse_main} relies on the notion of
random restriction and our own extension to the Hastad's Switching
Lemma~\cite{Hastad1987}.
\begin{definition}
Let $f(\x): \{-1, 1\}^n \rightarrow \real$ and $J$ be
subset of all the variables $x_1,\ldots,x_n$. Let $\z$ be an
assignment to remaining variables $\overline{J} = \{-1, 1\}^n
\setminus J$. Define $f|_{J|\z}: \{-1, 1\}^J \rightarrow \real$
to be the restricted function of $f$ on $J$ by setting all the remaining
variables in $\overline{J}$ according to $\z$.
\end{definition}
\begin{definition}
  ($\delta$-random restriction) A $\delta$-random restriction of
  $f(\x): \{-1, 1\}^n \rightarrow \real$ is defined as $f|_{J|\z}$,
  when elements in $J$ are selected randomly with probability
  $\delta$, and $\z$ is formed by randomly setting variables in
  $\overline{J}$ to either $-1$ or $1$.
  We also say $J|\z$ is a $\delta$-random restriction set.
  \label{def:randomres}
\end{definition}
With these definitions, we proved our weighted extension to
the Hastad's Switching Lemma:
\begin{lemma}
  (Weighted Hastad's Switching Lemma) 
  Suppose $f(\x) = \prod_{i=1}^m f_i(\x_i)$, in which every $f_i$ is
  a contractive function with width $w$ and gap $1-\eta$. 
  Suppose $J|\z$ is a $\delta$-random restriction set, then
  \begin{align}
  & Pr\left(\exists \mbox{ decision tree } h \mbox{ with depth } t, \ ||h-f_{J|\z}||_\infty \leq \gamma \right) \nonumber\\
  & \geq 1- \frac{1}{2}\left(\frac{\delta}{1-\delta} 8 u w\right)^t.\nonumber
  \end{align}
  in which $u=\lceil \log_{\eta} \gamma \rceil+1$ if $0 < \eta < 1$ or $u=1$  if $\eta = 0$ and $||.||_\infty$ means $\max|.|$.
  \label{lem:switching}
\end{lemma}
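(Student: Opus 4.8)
\emph{Plan.} The strategy is to mirror the classical compression proof of Hastad's switching lemma, replacing Boolean clauses by the contractive factors $f_i$ and replacing ``the formula is determined'' by ``the product is pinned down to within $\gamma$.'' The starting point is a reduction to a \emph{canonical decision tree}: it suffices to exhibit, for each restriction in the good event, one specific depth-$t$ decision tree that $\gamma$-approximates $f|_{J|\z}$, and then to bound the probability that this canonical tree is deep. I would build the canonical tree by processing the factors in a fixed order: at each node locate the first factor that is not yet constant under the current partial assignment, query \emph{all} of its (at most $w$) still-free variables in one block, and branch on their values. A factor that resolves to its maximum value $1$ is passed over; a factor that resolves to a value $<1$ (hence $\le\eta$) is recorded as a \emph{violation}.

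\emph{Why the tree can be made shallow and accurate.} Two stopping rules control the leaves. If every factor has become constant, the leaf outputs the exact product value, with zero error. If instead $u$ violations have accumulated along the path, then since each violated factor contributes at most $\eta$ and every other factor is at most $1$, the true value obeys $f|_{J|\z}\le\eta^{u}$; because $u=\lceil\log_\eta\gamma\rceil+1$ forces $\eta^{u}\le\gamma$, the leaf may safely output $0$ with error at most $\gamma$. Thus any restriction for which the canonical tree has depth at most $t$ yields a depth-$t$ decision tree $h$ with $\|h-f|_{J|\z}\|_\infty\le\gamma$, and the event in the lemma fails only when the canonical tree has depth exceeding $t$. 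When $\eta=0$ a single violation already zeroes the product, so $u=1$ and the construction collapses to the ordinary switching-lemma tree; this is the sanity check I would verify first.

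\emph{Bounding the depth.} It remains to prove $\Pr[\mathrm{depth}>t]\le\tfrac12(\tfrac{\delta}{1-\delta}8uw)^t$, for which I would adapt the injective-encoding (compression) argument behind the switching lemma. Take the leftmost root-to-leaf path of length exceeding $t$, truncate it to exactly $t$ queried variables, and map the $\delta$-random restriction $J|\z$ to the restriction that additionally fixes these $t$ variables, together with a short code describing, for each queried variable, its role inside the width-$\le w$ factor currently being processed. Comparing restrictions that leave $t$ fewer variables free contributes the factor $(\tfrac{\delta}{1-\delta})^t$, and the per-variable role descriptions contribute an $O(w)^t$ factor, exactly as in the Boolean case. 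The one genuinely new ingredient --- and the step I expect to be hardest --- is restoring injectivity of this map when the canonical path runs through several violations instead of halting at the first one: to let the decoder replay the path and recover each factor in turn, the code must also carry the running violation count, a number in $\{1,\dots,u\}$, attached to each queried variable. This extra tag multiplies the per-variable cost by $u$, turning $O(w)$ into $O(uw)$ and yielding the claimed $(\tfrac{\delta}{1-\delta}8uw)^t$ after the standard ratio-counting estimate. Verifying that this augmented code is uniquely decodable is where the real work lies.
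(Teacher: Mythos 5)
Your route is the same one the paper announces for Lemma~\ref{lem:switching}: a generalization of Razborov's encoding proof of the unweighted switching lemma, built on a canonical decision tree. Your construction of that tree is the right weighted analogue, and the two stopping rules are exactly where the new content lives: once all factors are constant the leaf is exact, and once $u$ factors have resolved to values $<1$ (hence $\le\eta$ by contractivity) the whole product is at most $\eta^{u}\le\gamma$, so outputting $0$ incurs error at most $\gamma$; this correctly explains both the form $u=\lceil\log_\eta\gamma\rceil+1$ and the collapse to the classical case ($u=1$) when $\eta=0$. The caveat is that you stop precisely where the paper's real work begins. The quantitative bound $\frac{1}{2}\left(\frac{\delta}{1-\delta}8uw\right)^t$ rests entirely on the injective compression map, and your proposal only gestures at it: the idea of tagging each queried variable with the running violation count is plausible and does produce the extra $u^t$, but you have not shown that the augmented code is uniquely decodable --- in particular, that the decoder can still identify the ``current'' factor when the re-pointed restriction drives it below $1$ while earlier violated factors along the path are also non-constant at value $\le\eta$, nor how factors already constant and violated under the initial restriction are absorbed into the violation count. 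Since the paper itself defers exactly this argument to its supplementary material (calling it the ``clever'' part), your submission should be read as a correct, well-aligned plan whose decisive combinatorial step --- the injectivity verification you yourself flag --- is still owed.
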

The formal proof of Lemma~\ref{lem:switching} is based on a clever
generalization of the proof by Razborov
for the unweighted case~\cite{Razborov1995Hastad}, and is deferred to the supplementary
materials.

\begin{lemma}
  Suppose $f(\x): \{-1, 1\}^n \rightarrow \real$ and $|f(\x)| \leq 1$. $J|\z$ is a
  $\delta$-random restriction set. $t\in \mathbb{N}$, $\gamma > 0$ and
  let $\epsilon_0 = Pr\{\neg \exists $ decision tree  $h$ with depth  $t$  such that  $||f|_{J|\z} - h||_\infty \leq \gamma\}$,
  then the Fourier spectrum of $f$ is $4\left(\epsilon_0 + (1-\epsilon_0) \gamma^2 \right)$-concentrated on degree up to $2t/\delta$.
\end{lemma}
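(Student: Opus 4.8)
The plan is to exploit the interaction between random restrictions and the Fourier expansion, using the depth-$t$/degree-$t$ correspondence of Theorem~\ref{th:decisiontree} to control the high-degree weight of each restricted function, and then a binomial tail bound to transfer that control back to $f$ itself. First I would record the basic Fourier identity for restrictions. Writing $f = \sum_S \hat{f}(S)\chi_S$ and grouping each $S$ according to its trace $U = S \cap J$ on the kept coordinates gives $\widehat{f|_{J|\z}}(U) = \sum_{T \subseteq \overline{J}} \hat{f}(U \cup T)\,\chi_T(\z)$. Taking the expectation over the random signs $\z$ and using orthogonality of the parities $\chi_T$ yields $\mathbb{E}_{\z}[\widehat{f|_{J|\z}}(U)^2] = \sum_{T \subseteq \overline{J}}\hat{f}(U\cup T)^2$. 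Summing over $U \subseteq J$ with $|U| > t$ and then averaging over the random choice of the kept set $J$ collapses the double sum into the clean identity $\mathbb{E}_{J,\z}\big[\sum_{|U|>t}\widehat{f|_{J|\z}}(U)^2\big] = \sum_S \hat{f}(S)^2\,\Pr_J[\,|S\cap J| > t\,]$.

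Second, I would lower-bound the survival probability $\Pr_J[\,|S\cap J| > t\,]$ on the high-degree sets. For $|S| > 2t/\delta$ the intersection size $|S \cap J|$ is Binomial$(|S|,\delta)$ with mean exceeding $2t$; since the median of a binomial is at least $\lfloor |S|\delta\rfloor \ge 2t > t$ (for $t \ge 1$), we obtain $\Pr_J[\,|S\cap J| > t\,] \ge \tfrac{1}{2}$. Substituting into the identity above gives $\mathcal{W}_{>2t/\delta}[f] = \sum_{|S|>2t/\delta}\hat{f}(S)^2 \le 2\,\mathbb{E}_{J,\z}\big[\sum_{|U|>t}\widehat{f|_{J|\z}}(U)^2\big]$, which reduces the whole problem to bounding the expected high-degree weight of a typical restriction.

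Third, I would bound that expected weight by conditioning on the event named in the definition of $\epsilon_0$. On the good event (probability $1-\epsilon_0$) there is a depth-$t$ decision tree $h$ with $\|f|_{J|\z}-h\|_\infty \le \gamma$; by Theorem~\ref{th:decisiontree} we have $\hat{h}(U)=0$ for $|U|>t$, so $\sum_{|U|>t}\widehat{f|_{J|\z}}(U)^2 = \sum_{|U|>t}\widehat{(f|_{J|\z}-h)}(U)^2 \le \mathbb{E}[(f|_{J|\z}-h)^2] \le \gamma^2$ by Parseval and the $L_\infty$ bound. On the bad event (probability $\epsilon_0$), the hypothesis $|f|\le 1$ gives the trivial Parseval bound $\sum_{|U|>t}\widehat{f|_{J|\z}}(U)^2 \le \mathbb{E}[(f|_{J|\z})^2] \le 1$. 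Hence the expected high-degree weight is at most $(1-\epsilon_0)\gamma^2 + \epsilon_0$, and combining with the factor from the second step yields $\mathcal{W}_{>2t/\delta}[f] \le 2\big(\epsilon_0 + (1-\epsilon_0)\gamma^2\big)$, comfortably inside the claimed $4\big(\epsilon_0+(1-\epsilon_0)\gamma^2\big)$ bound (a cruder tail estimate for $|S\cap J|$ recovers the stated constant directly).

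I expect the delicate point to be the first step: getting the restriction–Fourier identity exactly right and, in particular, verifying that averaging over both $J$ and $\z$ weights each $\hat{f}(S)^2$ precisely by $\Pr_J[\,|S\cap J|>t\,]$, together with the binomial median estimate that converts the statement ``degree $> 2t/\delta$ for $f$'' into ``degree $> t$ for the restriction'' with a constant-probability guarantee. Once that bookkeeping is in place, the decision-tree step is routine given Theorem~\ref{th:decisiontree} and Parseval, and the trivial bound on the bad event is immediate from $|f|\le 1$.
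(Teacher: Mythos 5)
Your proposal is correct and follows essentially the same route as the paper's proof: condition on the existence of a depth-$t$ approximating tree, use Theorem~\ref{th:decisiontree} plus Parseval to bound the high-degree weight of each restriction by $\gamma^2$ on the good event and by $1$ on the bad event, and transfer back to $f$ via the identity $\mean_{J|\z}\bigl[\hat{f}|_{J|\z}(S)^2\bigr]=\sum_{U}\Pr\{U\cap J=S\}\hat{f}(U)^2$ together with a binomial tail bound. The only cosmetic differences are that you derive that restriction identity rather than quoting it, and you use a binomial-median estimate giving survival probability $\geq 1/2$ (hence constant $2$) where the paper uses a Chernoff bound giving $\geq 1/4$ (hence the stated constant $4$); both yield the claimed concentration.
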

\begin{proof}
  We first bound $\mean_{J|\z} \left[\sum_{S\subseteq[n], |S| > t}
    \hat{f}|_{J|\z}(S)^2 \right]$. With probability $1-\epsilon_0$, there is
  a decision tree $h$ with depth $t$ such that $||f|_{J|\z}(\x) -
  h(\x)||_\infty \leq \gamma$.
  %
  In this scenario, 
  \begin{equation}
  \sum_{S\subseteq[n], |S| > t} \hat{f}|_{J|\z}(S)^2 = \sum_{S\subseteq[n], |S| > t} \left(\hat{f}|_{J|\z}(S) - \hat{h}(S)\right)^2.
  \label{eq:lem2_proof1}
  \end{equation}
  This is because due to Theorem~\ref{th:decisiontree}, $\hat{h}(S)=0$
  for all $S$ such that $|S| > t$.
  Because $|f|_{J|\z}(\x) - h(\x)| \leq \gamma$ for all $\x$, hence
  the right side of Equation~\ref{eq:lem2_proof1} must satisfy
  \begin{align}
    &\sum_{S\subseteq[n],|S| > t} \left(\hat{f}|_{J|\z}(S) - \hat{h}(S)\right)^2 
    \leq \sum_{S\subseteq[n]} \left(\hat{f}|_{J|\z}(S) - \hat{h}(S)\right)^2\nonumber\\
    &= \mean \left[ \left(f|_{J|\z}(\x) - h(\x)\right)^2 \right]
    \leq \gamma^2.\label{eq:lem2_proof2}
  \end{align}
  The second to the last equality of Equation~\ref{eq:lem2_proof2} is
  due to the Parseval's Identity.
  With probability $\epsilon_0$, there are no decision trees close to
  $f|_{J|\z}$. However, because $|f|_{J|\z}| \leq 1$, we must have 
  $\sum_{S\subseteq[n], |S| > t} \hat{f}|_{J|\z}(S)^2 \leq 1$.
  %
  Summarizing these two points, we have:
  $$\mean_{J|\z} \left[\sum_{S\subseteq[n], |S| > t}
    \hat{f}|_{J|\z}(S)^2 \right] \leq (1-\epsilon_0) \gamma^2 + \epsilon_0.$$

  Using a known result $\mean_{J|\z}\left[\hat{f}|_{J|\z}(S)^2\right]= \sum_{U \subseteq [n]}Pr\{U \cap J = S\} \cdot \hat{f}(U)^2$, we have: 
  \begin{align}
    &\mean_{J|\z} \left[\sum_{S\subseteq[n], |S| > t} \hat{f}|_{J|\z}(S)^2 \right]\nonumber
   =\sum_{S\subseteq[n], |S| > t} \mean_{J|\z} \left[\hat{f}|_{J|\z}(S)^2 \right]\nonumber\\
   &=\sum_{U\subseteq[n]} Pr\{|U \cap J| > t\} \cdot \hat{f}(U)^2.
  \end{align}  
  The distribution of random variable $|U \cap J|$ is Binomial($|U|,\delta$). When $|U| \geq 2t/\delta$, this variable has mean at least $2t$, using Chernoff bound, $Pr\{|U \cap J| \leq t\} \leq \left( 2/e \right)^t < 3/4$. Therefore,
  \begin{align}
    (1-\epsilon_0) \gamma^2 + \epsilon_0\nonumber
    \geq& \sum_{U\subseteq[n]} Pr\{|U \cap J| > t\} \cdot \hat{f}(U)^2\nonumber\\
    \geq& \sum_{U\subseteq[n], |U| \geq 2t/\delta} Pr\{|U \cap J| > t\} \cdot \hat{f}(U)^2\nonumber\\
    \geq& \sum_{U\subseteq[n], |U| \geq 2t/\delta} \left(1-\frac{3}{4}\right) \cdot \hat{f}(U)^2\nonumber.
  \end{align}
  
  We get our claim $\sum_{|U| \geq 2t/\delta} \hat{f}(U)^2 \leq 4((1-\epsilon_0) \gamma^2 + \epsilon_0)$.
\end{proof}

  Now we are ready to prove Theorem~\ref{th:sparse_main}. 
  Firstly suppose $\eta > 0$, choose $\gamma = \sqrt{{\epsilon/8}}$,
  which ensures $4(1-\epsilon_0)\gamma^2 \leq 1/2\cdot \epsilon$.
  Next choose $\delta = 1/(16uw + 1)$, $t=C \log(1/\epsilon)$,
  which ensures
  $$\epsilon_0 = \frac{1}{2}\left(\frac{\delta}{1-\delta} 8 u w\right)^t = \frac{1}{2} \epsilon^C.$$
  Choose $C$ large enough, such that $4 \cdot 1/2 \cdot \epsilon^C \leq 1/2\cdot \epsilon$. 
  Now we have $4((1-\epsilon_0) \gamma^2 + \epsilon_0) \leq \epsilon$.
  At the same time, $2t/\delta = C \log(1/\epsilon) (16uw + 1) = O(w \log(1/\epsilon) \log_\eta \epsilon)$.\footnote{$\eta=0$ corresponds  to the classical CNF (or DNF) case.}



\section{Variable Elimination in the Fourier Domain}

We have seen above that a Fourier representation can provide a useful
compact representation of certain complex probability distributions.
In particular, this is the case for distributions that can be captured
with a relatively sparse set of Fourier coefficients.  We will now
show the practical impact of this new representation by using it in
an inference setting.
In this section, we propose an
inference algorithm which works like the classic Variable Elimination
(VE) Algorithm, except for passing messages represented in the Fourier
domain.

The classical VE algorithm consists of two basic steps --
the multiplication step and the elimination step.  The multiplication
step takes $f$ and $g$, and returns $f \cdot g$, while the elimination
step sums out one variable $x_i$ from $f$ by returning $\sum_{x_i} f$.
Hence, the success of the VE procedure in the Fourier domain depends on
efficient algorithms to carry out the aforementioned two steps.
A naive approach is to transform the representation back to the value
domain, carry out the two steps there, then transform it
back to Fourier space. While correct, this strategy would eliminate all the benefits of Fourier
representations.


%
%
Luckily, the elimination step can be carried out in the Fourier domain as follows: 
\begin{theorem}
  Suppose $f$ has a Fourier expansion: $f(\x) = \sum_{S \subseteq [n]}
  \hat{f}(S) \chi_S(\x)$. Then the Fourier expansion for $f' =
  \sum_{x_i} f$ when $x_i$ is summed out is: $\sum_{S \subseteq [n]}
  \hat{f}'(S) \chi_S(\x)$, where $\hat{f}'(S) = 2\hat{f}(S)$ if $i
  \not \in S$ and $\hat{f}'(S) = 0$ if $i \in S$.
  \label{th:ve_sum}
\end{theorem}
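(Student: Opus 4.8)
The plan is to compute the sum-out operation directly on the Fourier expansion, exploiting linearity and the elementary behavior of a single parity basis function $\chi_S$ under summation over $x_i$. No heavy machinery is needed; the whole argument reduces to how $\sum_{x_i \in \{-1,1\}} \chi_S(\x)$ behaves depending on whether $i \in S$.

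First I would substitute the given Fourier expansion of $f$ into the definition $f' = \sum_{x_i} f$, obtaining $f'(\x) = \sum_{x_i \in \{-1,1\}} \sum_{S \subseteq [n]} \hat{f}(S)\, \chi_S(\x)$. Since both sums are finite I may interchange them, which gives $f'(\x) = \sum_{S \subseteq [n]} \hat{f}(S)\left( \sum_{x_i \in \{-1,1\}} \chi_S(\x) \right)$. Thus the entire computation reduces to evaluating the inner sum $\sum_{x_i \in \{-1,1\}} \chi_S(\x)$ for each $S$.

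Next I would split into two cases according to whether $i \in S$. If $i \notin S$, then $\chi_S(\x) = \prod_{j \in S} x_j$ does not involve $x_i$, so it is constant as $x_i$ ranges over $\{-1,1\}$ and the inner sum is simply $2\chi_S(\x)$. If $i \in S$, I would factor $\chi_S(\x) = x_i \cdot \chi_{S \setminus \{i\}}(\x)$; since $\chi_{S \setminus \{i\}}$ is independent of $x_i$, the inner sum becomes $\chi_{S \setminus \{i\}}(\x)\cdot \sum_{x_i \in \{-1,1\}} x_i = \chi_{S \setminus \{i\}}(\x)\cdot\bigl(1 + (-1)\bigr) = 0$. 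Combining the two cases yields $f'(\x) = \sum_{S \subseteq [n],\, i \notin S} 2\hat{f}(S)\, \chi_S(\x)$.

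Finally I would observe that this right-hand side is already written as a multilinear polynomial in the $\chi_S$ basis, so by the uniqueness part of the Hadamard-Fourier Transformation theorem it must \emph{be} the Fourier expansion of $f'$. Reading off the coefficients then gives exactly $\hat{f}'(S) = 2\hat{f}(S)$ for $i \notin S$ and $\hat{f}'(S) = 0$ for $i \in S$. There is no genuine obstacle in this proof: everything is elementary once the order of summation is swapped. The only point worth stating carefully is the appeal to uniqueness of the Fourier expansion, which is what licenses the conclusion that the coefficients produced are truly those of $f'$ rather than merely one among several representations.
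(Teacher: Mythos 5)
Your proof is correct and complete: interchanging the two finite sums, evaluating $\sum_{x_i \in \{-1,1\}} \chi_S(\x)$ by cases on whether $i \in S$, and invoking the uniqueness of the multilinear expansion is exactly the argument this statement requires. The paper defers its own proof to the supplementary materials, but this elementary direct computation is the standard route and there is nothing missing.
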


The proof is left to the supplementary materials. 
From Theorem~\ref{th:ve_sum}, one only needs a linear scan of all the
Fourier coefficients of $f$ in order to compute the Fourier expansion
for $\sum_{x_0} f$. Suppose $f$ has $m$ non-zero coefficients in its
Fourier representation, this linear scan takes time $O(m)$.

There are several ways to implement the multiplication step. The first
option is to use the school book multiplication. To multiply functions $f$
and $g$, one multiplies every pair of their Fourier coefficients, and
then combines similar terms. If $f$ and $g$ have $m_f$ and $m_g$ terms
in their Fourier representations respectively, this operation takes
time $O(m_fm_g)$.
%
As a second option for multiplication, one can convert $f$ and $g$ to
their value domain, multiply corresponding entries, and then convert
the result back to the Fourier domain. Suppose the union of the
domains of $f$ and $g$ has $n$ variables ($2^n$ Fourier terms), the
conversion between the two domains dominates the complexity, which is
$O(n \cdot 2^n)$.
Nonetheless, when $f$ and $g$ are relatively dense, this method could
have a better time complexity than the school book multiplication.
In our implementation, we trade the complexity between the
aforementioned two options, and always use the one with lower time
complexity.

Because we are working on models in which exact inference is
intractable, sometimes we need to truncate the Fourier representation to
prevent an exponential explosion.
We implement two variants for truncation. 
One is to keep low degree Fourier coefficients, which is inspired by
our theoretical observations. 
The other one is to keep Fourier coefficients with large absolute
values, which offers us a little bit extra flexibility, especially
when the whole graphical model is dominated by a few key variables and
we would like to go over the degree limitations occasionally.
%
We found both variants work equally well.

\section{Experiments}
\begin{figure}[!t]
  \centering
  \includegraphics[width=0.49\linewidth]{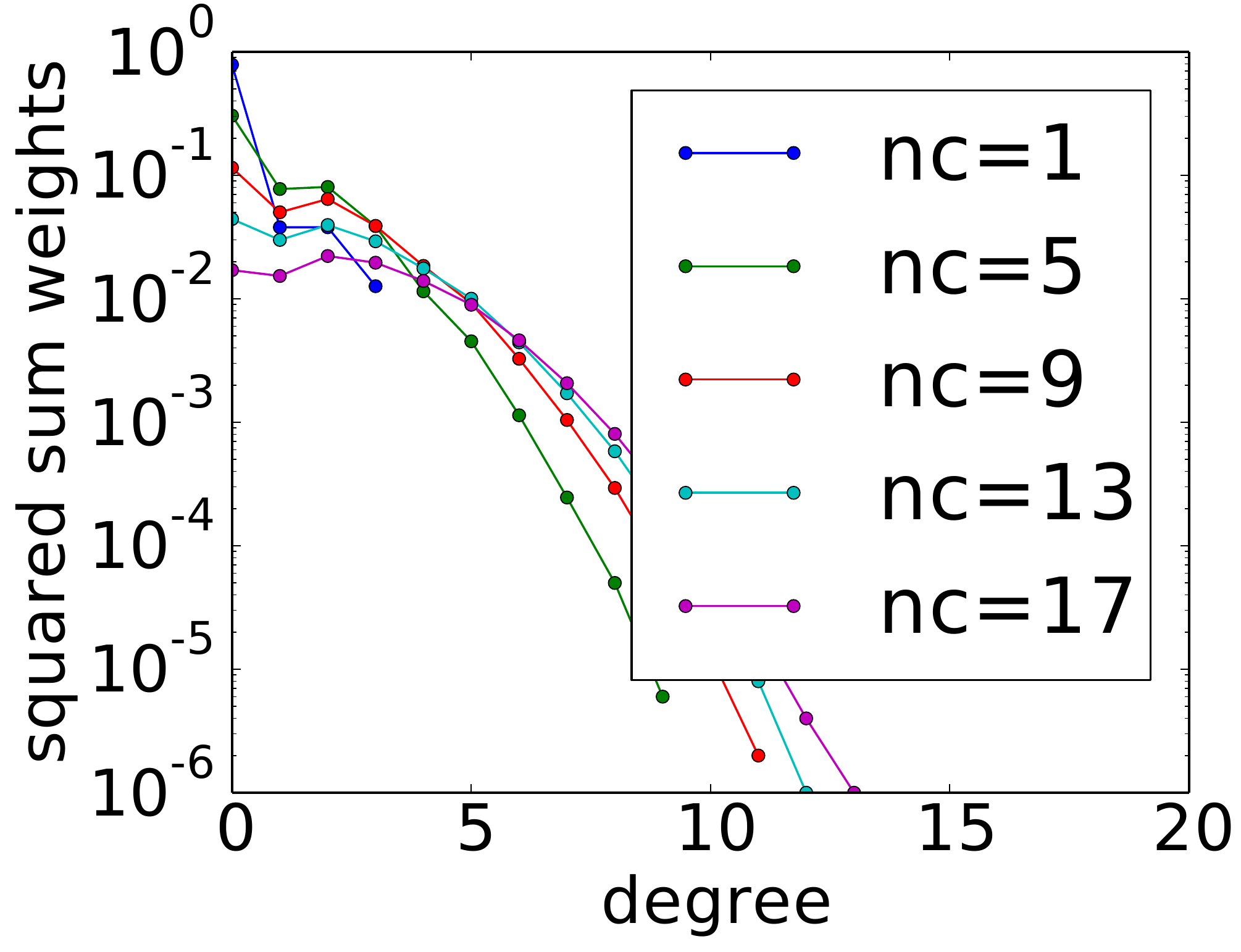}
  \includegraphics[width=0.49\linewidth]{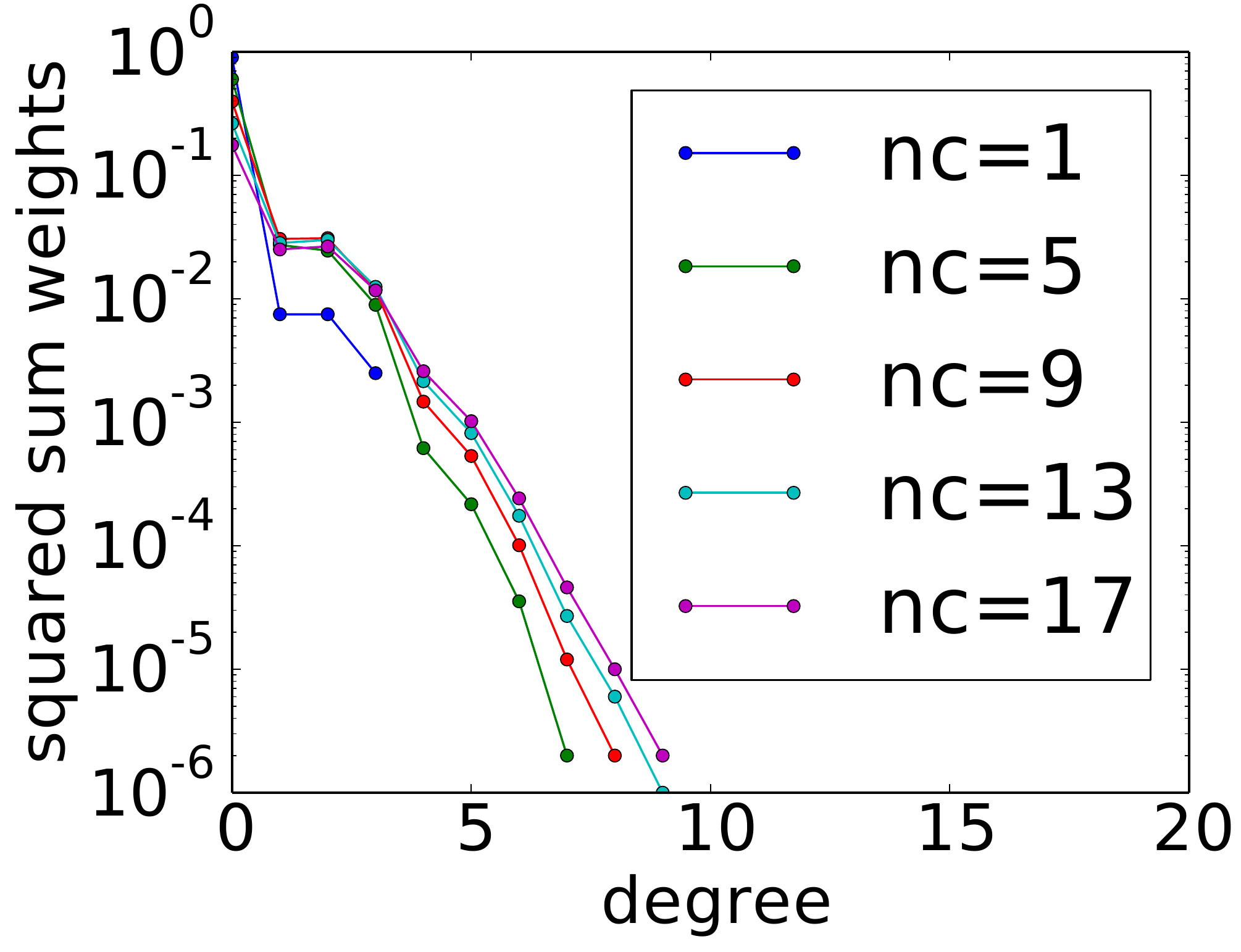}
  \caption{Weight concentration on low degree coefficients in the
    Fourier domain. Weight random 3-SAT instances, with 20 variables
    and nc clauses (Left) $\eta=0.1$, (Right) $\eta=0.6$.}
  \label{fig:wconcentrate}
  \vskip -0.5cm
\end{figure}
\begin{figure}[!t]
\centering
\begin{subfigure}[b]{0.49\linewidth}
\includegraphics[width=\textwidth]{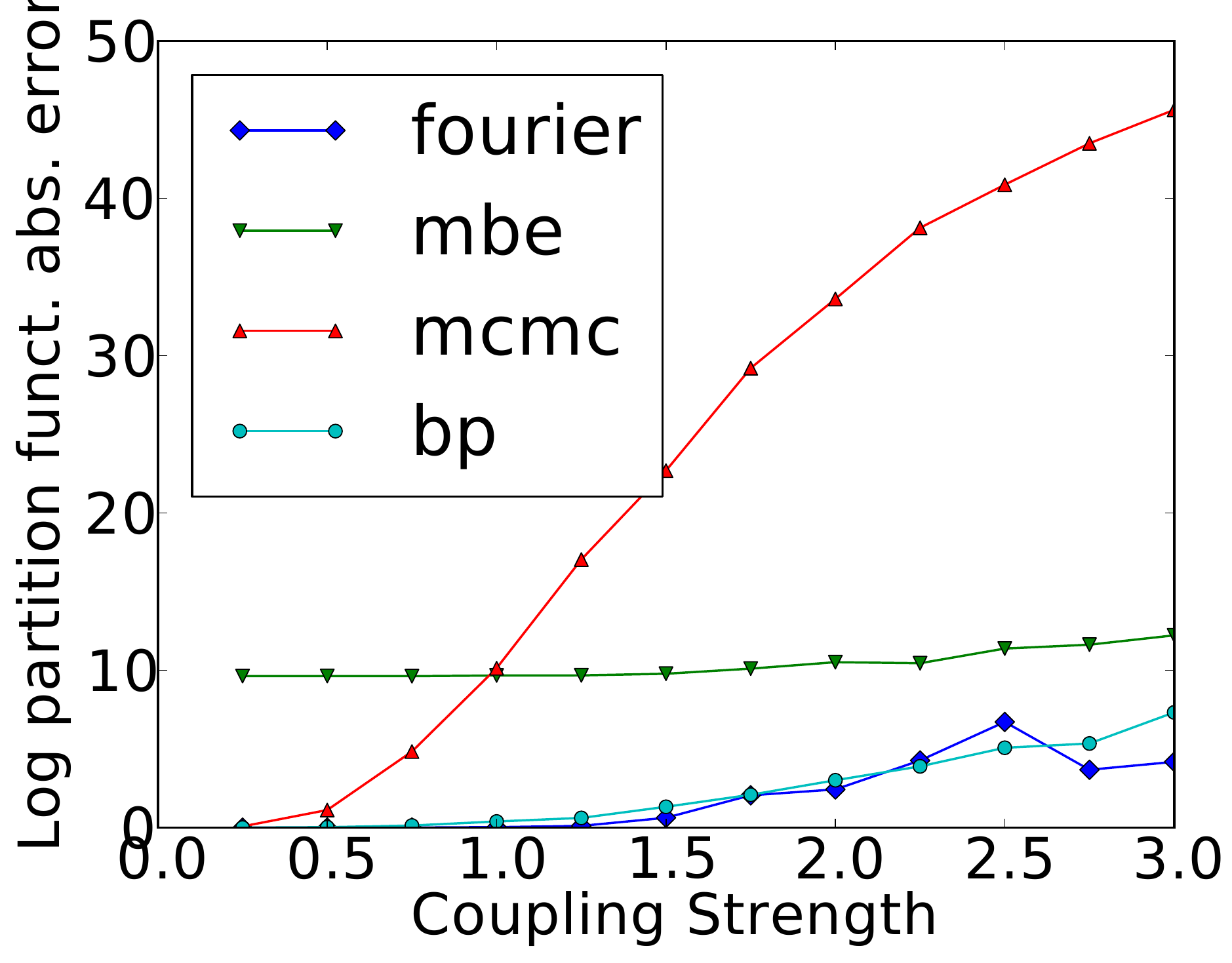}
\caption{Mixed. Field 0.01}
\label{fig:b}
\end{subfigure}
\begin{subfigure}[b]{0.49\linewidth}
\includegraphics[width=\textwidth]{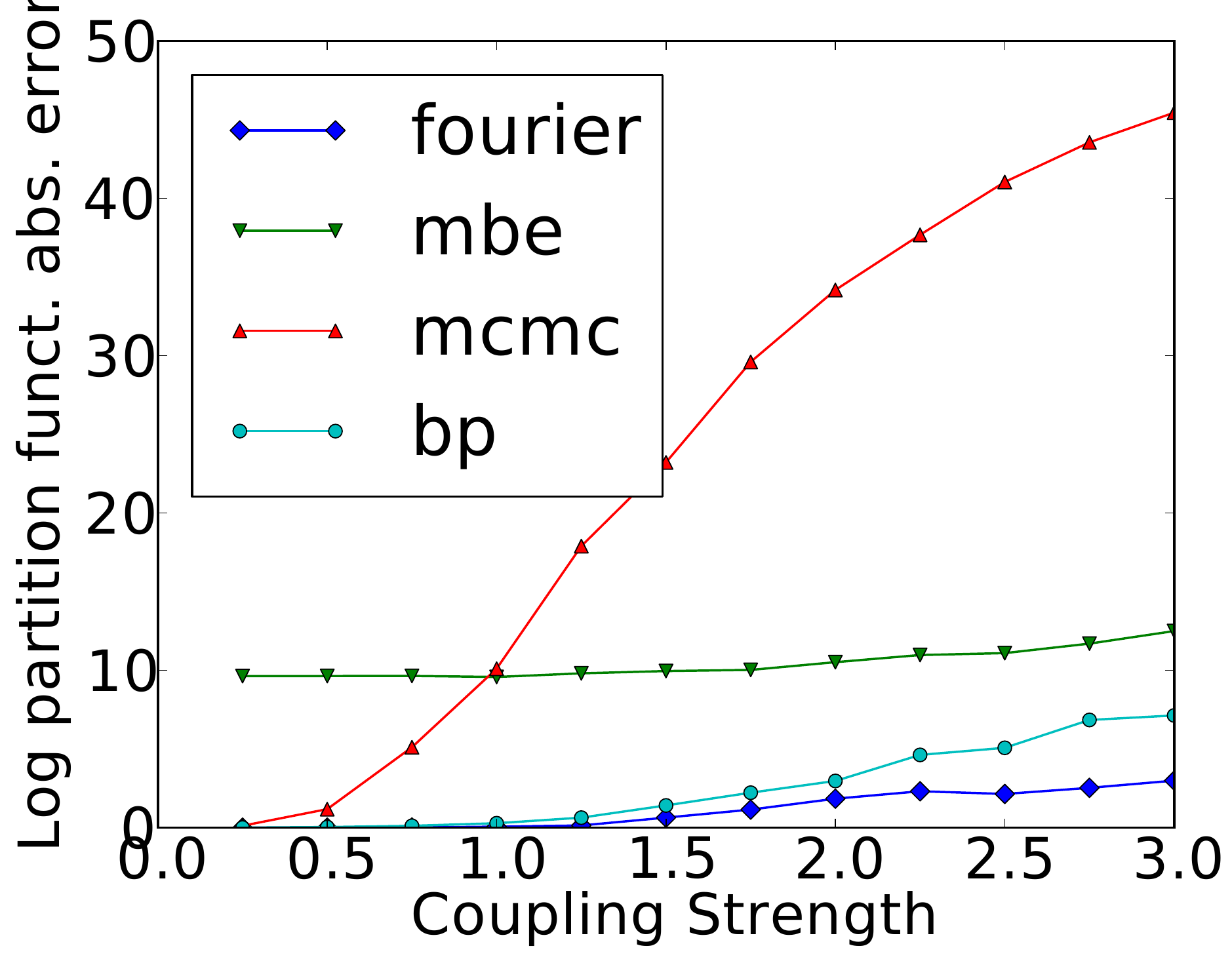}
\caption{Mixed. Field 0.1}
\label{fig:c}
\end{subfigure}
\caption{Log-partition function absolute errors for $15 \times 15$ small scale Ising Grids. Fourier is for the VE Algorithm in the Fourier domain. mbe is for Mini-bucket Elimination. BP is for Belief Propagation. Large scale experiments are on the next page.}\label{fig:ising_res}
\vskip -0.5cm
\end{figure}

\subsection{Weight Concentration on Low Degree Coefficients}
We first validate our theoretical results on the weight concentration
on low-degree coefficients in Fourier representations. We 
evaluate our results on random weighted 3-SAT instances with 20
variables.
Small instances are chosen because we have to compute the full Fourier
spectrum.
The weighted 3-SAT instances is specified by a CNF and a weight
$\eta$. Each factor corresponds to a clause in the CNF. When the
clause is satisfied, the corresponding factor evaluates to 1,
otherwise evaluates to $\eta$.
For each $\eta$ and the number of clauses $nc$, we randomly generate 100
instances. For each instance, we compute the squared sum weight at
each degree: $\mathcal{W}_{k}[f] = \sum_{S \subseteq [n], |S| = k}
\hat{f}(S)^2$.
Figure~\ref{fig:wconcentrate} shows the median value of the squared
sum weight over 100 instances for given $\eta$ and $nc$ in log scale.
As seen from the figure, although the full representation involves
coefficients up to degree 20 (20 variables), the weights are
concentrated on low degree coefficients (up to 5), regardless of
$\eta$, which is in line with the theoretical result.

\subsection{Applying Fourier Representation in Variable Elimination}

\addtolength{\belowcaptionskip}{-10pt}
\begin{table*}[!tb]
  \centering
  \begin{tabular}{c|c|cccccc}
    \hline
    Category & \#ins & Minibucket & Fourier (max coef) & Fourier (min deg) & BP & MCMC & HAK\\
    \hline
    bn2o-30-* & 18 & 3.91 & $1.21 \cdot 10^{-2}$ & $1.36 \cdot 10^{-2}$ & $0.94 \cdot 10^{-2}$ & 0.34 & ${\bf 8.3\cdot 10^{-4}}$\\ 
    grids2/50-* & 72 & 5.12 & ${\bf 3.67 \cdot 10^{-6}}$ & $7.81 \cdot 10^{-6}$ & $1.53 \cdot 10^{-2}$ & -- & $1.53 \cdot 10^{-2}$\\
    grids2/75-* & 103 & 18.34 & ${\bf 5.41  \cdot 10^{-4}}$ \ & $6.87 \cdot 10^{-4}$ & $2.94   \cdot 10^{-2}$ & -- & $2.94   \cdot 10^{-2}$\\
    grids2/90-* & 105 & 26.16 & ${\bf 2.23 \cdot 10^{-3}}$ & $5.71 \cdot 10^{-3}$ & $5.59 \cdot 10^{-2}$ & -- & $5.22 \cdot 10^{-2}$\\
    blockmap\_05* & 48 & $1.25 \cdot 10^{-6}$ & ${\bf 4.34 \cdot 10^{-9}}$ & ${\bf 4.34 \cdot 10^{-9}}$ & 0.11 & -- & $8.73\cdot 10^{-9}$\\
    students\_03* & 16 & $2.85\cdot 10^{-6}$ & ${\bf 1.67\cdot 10^{-7}}$ & ${\bf 1.67\cdot 10^{-7}}$ & 2.20 & -- & $3.17\cdot 10^{-6}$\\
    mastermind\_03* & 48 & 7.83 & 0.47 & 0.36 & 27.69 & -- & ${\bf 4.35\cdot 10^{-5}}$\\
    mastermind\_04* & 32 & 12.30 & ${\bf 3.63 \cdot 10^{-7}}$ & ${\bf 3.63 \cdot 10^{-7}}$ & 20.59 & -- & $4.03\cdot 10^{-5}$\\
    mastermind\_05* & 16 & 4.06 & ${\bf 2.56\cdot 10^{-7}}$ & ${\bf 2.56\cdot 10^{-7}}$ & 22.47 & -- & $3.02\cdot 10^{-5}$\\
    mastermind\_06* & 16 & 22.34 & ${\bf 3.89\cdot 10^{-7}}$ & ${\bf 3.89\cdot 10^{-7}}$ & 17.18 & -- & $4.5\cdot 10^{-5}$\\
    mastermind\_10* & 16 & 275.82 & 5.63 & 2.98 & 26.32 & --& ${\bf 0.14}$\\
    \hline
  \end{tabular}
  \caption{The comparsion of various inference algorithms on several categories in UAI 2010 Inference Challenge. The median differences in log partition function $|\log_{10} Z_{\mbox{approx}} - \log_{10} Z_{\mbox{true}}|$ averaged over benchmarks in each category are shown. Fourier VE algorithms outperform Belief Propagation, MCMC and Minibucket Algorithm. \#ins is the number of instances in each category.} 
  \label{tab:uai2010}
  \vskip -0.3cm
\end{table*}
\addtolength{\belowcaptionskip}{10pt}
%
We integrate the Fourier representation into the variable elimination
algorithm, and evaluate its performance as an approximate probabilistic
inference scheme to estimate the partition function of undirected graphical models.
We implemented two versions of the Fourier Variable Elimination
Algorithm. One version always keeps coefficients with the largest
absolute values when we truncate the representation. The other
version keeps coefficients with the lowest degree.
Our main comparison is against Mini-Bucket Elimination, since the two
algorithms are both based on variable elimination, with the only
difference being the way in which the messages are approximated.
We obtained the source code from the author of Mini-Bucket
Elimination, which includes sophisticated heuristics for splitting
factors.
The versions we obtained are used for Maximum A Posteriori Estimation
(MAP). We augment this version to compute the partition function by
replacing the maximization operators by summation operators.
We also compare our VE algorithm with MCMC and Loopy Belief
Propagation. We implemented the classical Ogata-Tanemura scheme
\cite{OgataTanemura1981} with Gibbs transitions in MCMC to estimate the partition function.
We use the implementation in LibDAI \cite{Mooij2010libDAI} for belief
propagation, with random updates, damping rate of 0.1 and the maximal
number of iterations 1,000,000.
Throughout the experiment, we control the number of MCMC steps, the
$i$-bound of Minibucket and the message size of Fourier VE to make
sure that the algorithms complete in reasonable time (several
minutes).

We first compare on small instances for which we can compute ground truth using  the state-of-the-art exact inference
algorithm ACE \cite{Darwiche2002compilation}.
We run on 15-by-15 Ising models with mixed coupling
strengths and various field strengths. We run 20 instances for each
coupling strength.
For a fair comparison, we fix the size of the messages for both
Fourier VE and Mini-bucket to $2^{10}=1,024$. Under this message size
VE algorithms cannot handle the instances exactly.
Figure~\ref{fig:ising_res} shows the results. The performance of the
two versions of the Fourier VE algorithm are almost the same, so we only
show one curve. Clearly the Fourier VE Algorithm outperforms
the MCMC and the Mini-bucket Elimination. It also outperforms 
Belief Propagation when the field strength is relatively strong.

In addition, we compare our inference algorithms on large benchmarks
from the UAI 2010 Approximate Inference Challenge
\cite{UAI10Challenge}.
Because we need the ground truth to compare with, we only consider
benchmarks that can be solved by ACE \cite{Darwiche2002compilation} in
2 hours time, and 8GB of memory.
The second column of Table~\ref{tab:uai2010} shows the number of
instances that ACE completes with the exact answer. 
The 3rd to the 7th column of Table~\ref{tab:uai2010} shows the result for several inference algorithms, including the Minibucket algorithm with $i$-bound of 20, two versions of the Fourier Variable Elimination algorithms, belief propagation and MCMC. To be fair with Minibucket, we set the message size for Fourier VE to be 1,048,576 ($2^{20}$). Because the
complexity of the multiplication step in Fourier VE is quadratic in
the number of coefficients, we further shrink the message size to
1,024 ($2^{10}$) during multiplication.
We allow 1,000,000 steps for burn in and another 1,000,000 steps for
sampling in the MCMC approach.
The same with the inference challenge, we compare inference algorithms
on the difference in the log partition function $|\log
Z_{\mbox{approx}} - \log Z_{\mbox{true}}|$.
The table reports the median differences, which are averaged over all benchmarks in each category.
%
%
If one algorithm fails to complete on one instance,
we count the difference in partition function as $+\infty$, so it is
counted as the worst case when computing the median.
For MCMC, ``--'' means that the Ogata-Tanemura scheme did not find a
belief state with substantial probability mass, so the result is way
off when taking the logarithm.
%
%
The results in Table~\ref{tab:uai2010} show that Fourier Variable
Elimination algorithms outperform MCMC, BP and Minibucket on
many categories in the Inference challenge. 
In particular, Fourier VE works well on grid and structural instances.
We also listed the performance of a Double-loop Generalized Belief
Propagation~\cite{Heskes2002HAK} in the last column of
Table~\ref{tab:uai2010}.
This implementation won one category in the Inference challenge, and
contains various improvements besides the techniques presented
in the paper. 
%
We used the parameter settings for high precision in the
Inference challenge for HAK.
As we can see, Fourier VE matches or outperforms this
implementation in some categories.
Unlike fully optimized HAK, Fourier VE is a simple variable
elimination algorithm, which involves passing messages only once.
Indeed, the median time for Fourier VE to complete on bn2o
instances is about 40 seconds, while HAK takes 1800 seconds.
We are researching on incorporating the Fourier representation into
message passing algorithms.


\begin{figure}[tb]
\centering
\begin{subfigure}[b]{0.48\linewidth}
\includegraphics[width=\textwidth]{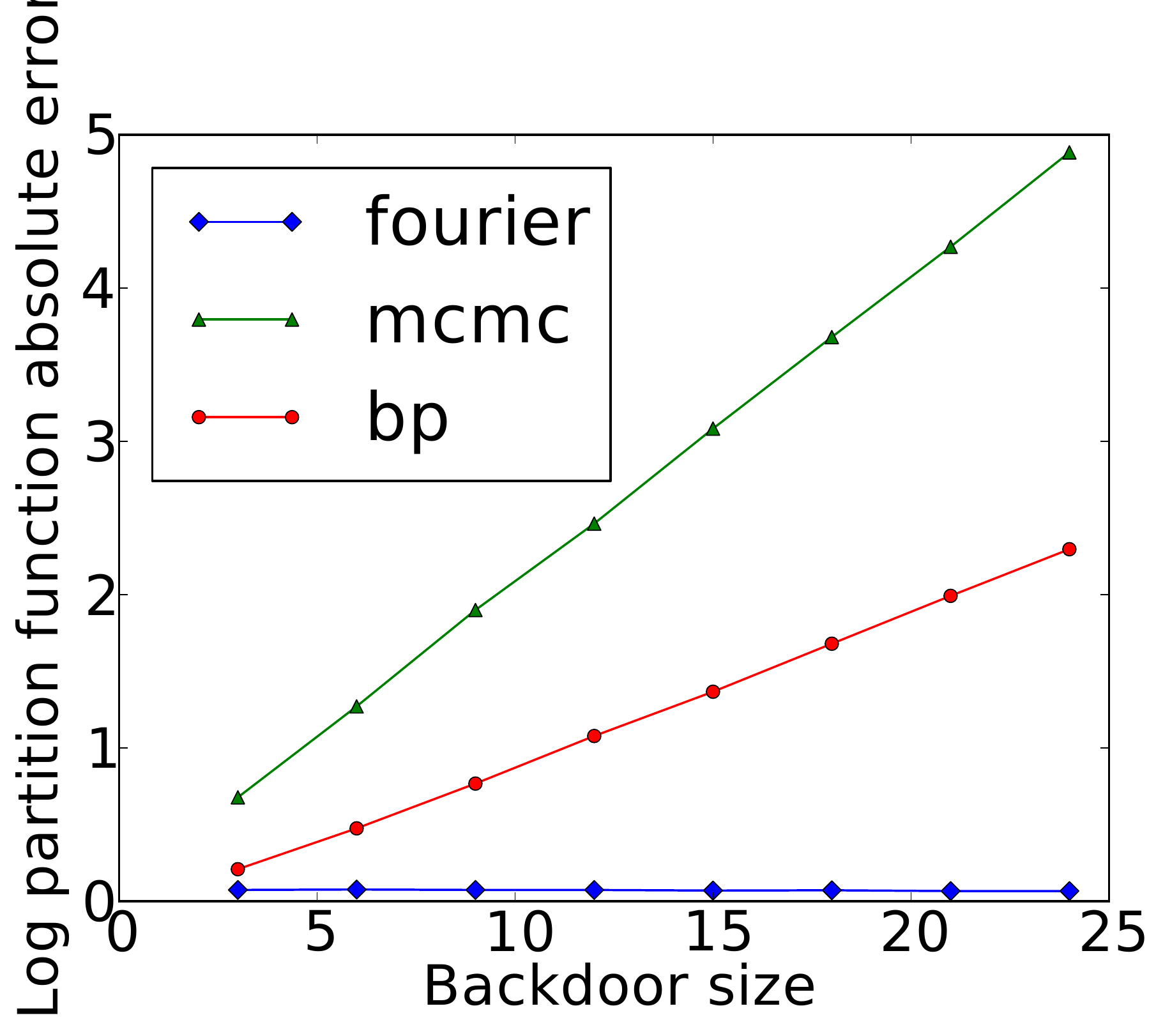}
\caption{Independent backdoors}
\label{fig:a2}
\end{subfigure}
~ 
\begin{subfigure}[b]{0.48\linewidth}
\includegraphics[width=\textwidth]{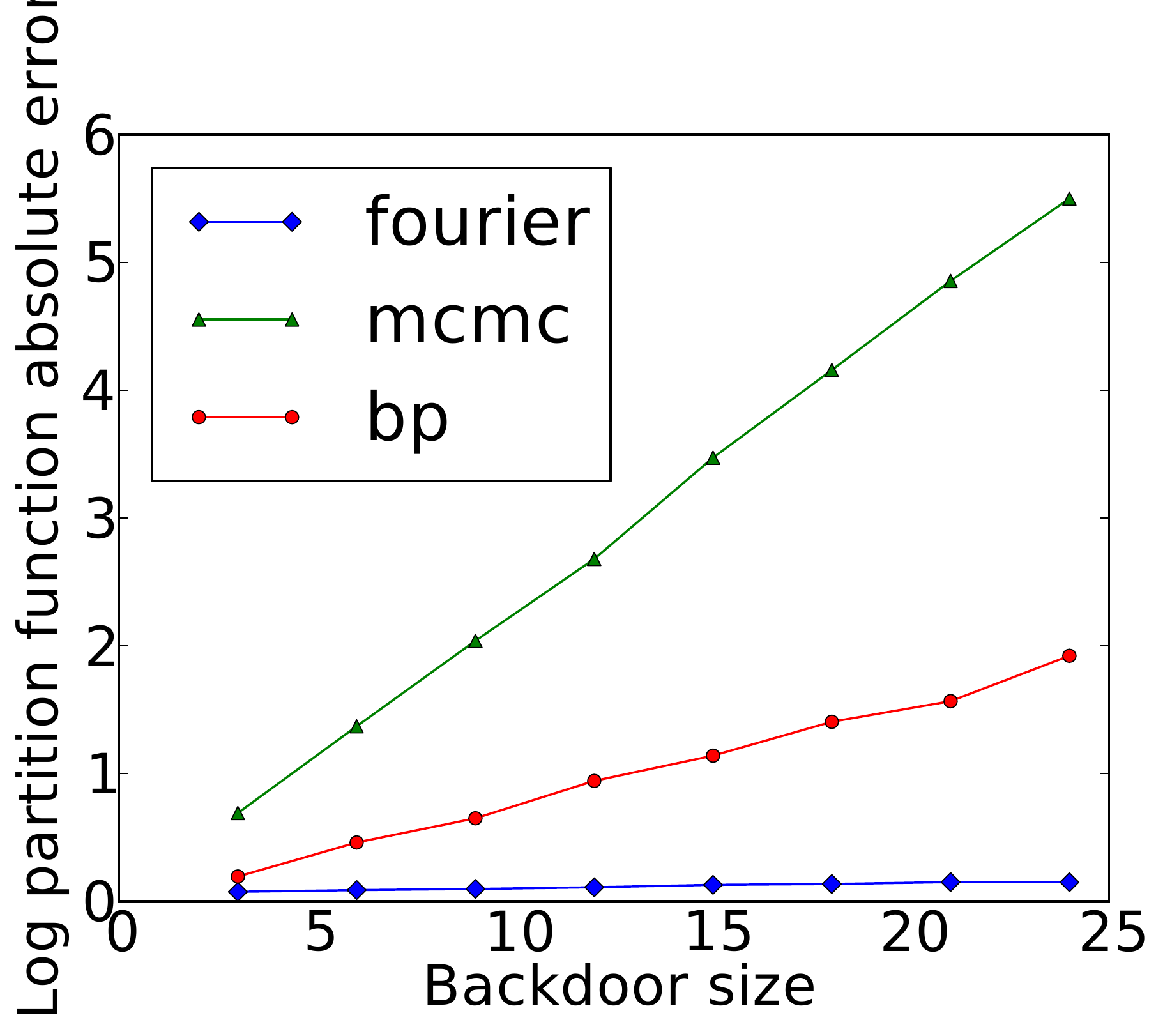}
\caption{Linked backdoors}
\label{fig:b2}
\end{subfigure}
\caption{Log-partition function absolute errors for Weighted Models with Backdoor Structure.}
\label{fig:bckdoor}
\vskip -0.5cm
\end{figure}
Next we evaluate their performance on a synthetically generated
benchmark beyond the capability of exact inference algorithms. For one
instance of this benchmark, we randomly generate factors of size $3$
with low coupling weights. We then add a backdoor structure to each
instance, by enforcing coupling factors of size $3$ in which the $3$
variables of the factor must take the same value. For these instances,
we can compute the expected value of the partition function and
compare it with the output of the algorithms. We report the results on
Figure \ref{fig:bckdoor}. Here the experimental setup for each
inference algorithm is kept the same as the previous algorithm. The
Mini-bucket approach is not reported, as it performs very poorly on
these instances. The performance of the two implementations of Fourier VE are again similar, so they are combined into one curve. 
These results show that the Fourier approach
outperforms both MCMC and Belief Propagation, and suggest that it can
perform arbitrarily better than both approaches as the size of the
backdoor increases.

%
%


Finally, we compare different inference algorithms on a machine
learning application. Here we learn a grid Ising model from data. 
The computation of the partition function is beyond any exact
inference methods.
Hence in order to compare the performance of different inference
algorithms, we have to control the training data that are fit into the
Ising Model, to be able to predict what the learned model looks like.
To generate training pictures, we start with a template with nine
boxes (shown in Figure~\ref{fig:ml_template}).
The training pictures are of size $25\times 25$, so the partition function cannot be computed exactly by variable elimination algorithms with message size $2^{20} = 1,048,576$.  
Each of the nine boxes in the template will have a 50\% opportunity to
appear in a training picture, and the occurrences of the nine
boxes are independent of each other.
We further blur the training images with 5\% white noise. 
Figures~\ref{fig:ml_train_1} and \ref{fig:ml_train_2} show two examples of
the generated training images.
We then use these training images to learn a grid Ising Model:
\begin{align*}
  Pr(\x) &= \frac{1}{Z} \exp \left(\sum_{i\in V} a_{i} x_{i} + \sum_{(i,j)\in E} b_{i,j}  x_{i} x_{j} \right),
\end{align*}
where $V$, $E$ are the node and edge set of a grid, respectively. We
train the model using contrastive divergence
\cite{Hinton2002ContrasiveDivergence}, with $k=15$ steps of blocked
Gibbs updates, on $20,000$ such training images. (As we will see,
\emph{vanilla} Gibbs sampling, which updates one pixel at a time, does
not work well on this problem.)
We further encourage a sparse model by using a L1 regularizer. 
\begin{figure}[tb]
  \centering
  \begin{subfigure}[b]{0.24\linewidth}
    \includegraphics[width=\textwidth]{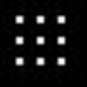}
    \caption{Template}
    \label{fig:ml_template}
  \end{subfigure}
  \begin{subfigure}[b]{0.24\linewidth}
    \includegraphics[width=\textwidth]{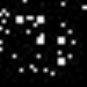}
    \caption{Train Pic 1}
    \label{fig:ml_train_1}
  \end{subfigure}
  \begin{subfigure}[b]{0.24\linewidth}
    \includegraphics[width=\textwidth]{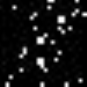}
    \caption{Train Pic 2}
    \label{fig:ml_train_2}
  \end{subfigure}
  \\
  \begin{subfigure}[b]{0.24\linewidth}
    \includegraphics[width=\textwidth]{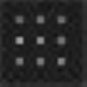}
    \caption{Fourier}
    \label{fig:ml_fourier}
  \end{subfigure}
  \begin{subfigure}[b]{0.24\linewidth}
    \includegraphics[width=\textwidth]{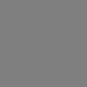}
    \caption{MCMC}
    \label{fig:ml_gibbs}
  \end{subfigure}  
  \begin{subfigure}[b]{0.24\linewidth}
    \includegraphics[width=\textwidth]{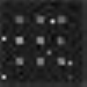}
    \caption{mbe}
    \label{fig:ml_mbe}
  \end{subfigure}
  \begin{subfigure}[b]{0.24\linewidth}
    \includegraphics[width=\textwidth]{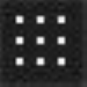}
    \caption{Mean Field}
    \label{fig:ml_mf}
  \end{subfigure}
 \vspace*{-0.15 in}
 \caption{Comparison of several inference algorithms on computing the marginal probabilities of an Ising model learned from synthetic data. (a) The template to generate training images and (b,c) two example images in the training set. (d,e,f,g) The marginal probabilities obtained via four inference algorithms. Only the Fourier algorithm captures the fact that the 9 boxes are presented half of the time independently in the training data.}
  \label{fig:ml_exp}
  \vskip -0.5cm
\end{figure}
Once the model is learned, we use inference algorithms to compute the
marginal probability of each pixel. 
Figure~\ref{fig:ml_fourier}~\ref{fig:ml_gibbs}~\ref{fig:ml_mbe} and
\ref{fig:ml_mf} show the marginals computed for the Fourier VE, MCMC,
Minibucket Elimination, and the Mean Field on the learned model (white
means the probability is close to 1, black means close to 0).
Both the Minibucket and the Fourier VE keep a message size of
$2^{20}=1,048,576$, so they cannot compute the marginals
exactly. Fourier VE keeps coefficients with largest absolute value during multiplication.
For pixels outside of the nine boxes, in most circumstances they are
black in the training images. Therefore, their marginals in the
learned model should be close to 0.
For pixels within the nine boxes, half of the time they are white in
the training images. 
%
Hence, the marginal probabilities of these pixels in the learned model
should be roughly 0.5. 
We validated the two aforementioned empirical observations on
images with small size which we can compute the marginals exactly.
As we can see, only the Fourier Variable Elimination Algorithm is able
to predict a marginal close to 0.5 on these pixels. The performance of the MCMC algorithm
(a Gibbs sampler, updating one pixel at a time) is poor.
The Minibucket Algorithm has noise on some
pixels. The marginals of the nine boxes predicted by mean field are
close to 1, a clearly wrong answer.

\section{Conclusion}

We explore a novel way to exploit compact representations of
high-dimensional probability distributions in approximate
probabilistic inference.
Our approach is based on discrete Fourier Representation of weighted
Boolean Functions, complementing the classical method of exploiting 
conditional independence between the variables.
We show that a large class of weighted probabilistic graphical models
have a compact Fourier representation. This theoretical result opens
up a novel way of approximating probability distributions.
We demonstrate the significance of this approach by applying it to the
variable elimination algorithm, obtaining very encouraging results.

\section*{Acknowledgments}
This research was supported by National Science Foundation (Award
Number 0832782, 1522054, 1059284).

{
\bibliography{BayesianOpt2,BooleanFunction,message_passing_parity_constraint}
\bibliographystyle{icml2016}
}
\clearpage

\newpage
\section*{Supplementary Materials}

\subsection*{Proof of Lemma 1}

Let $R_n^l$ be the collection of restrictions on $n$ Boolean variables
$x_1, \ldots, x_n$. Each restriction in $R_n^l$ leaves a set of $l$ variables $J=\{x_{i_1}, \ldots, x_{i_l}\}$ open, while it fixes all other
variables $x_i \not \in J$ to either -1 or 1. It is easy to see
that the size of $R_n^l$ is given by:
\begin{equation}
  |R_n^l| = {n \choose l} \cdot 2^{n-l}.
\end{equation}

For a restriction $J|\z \in R_n^l$, call $J|\z$ \textit{bad} if and
only if for all decision tree $h$ with depth $t$, there exists at
least one input $\x_J$, such that $|h(\x_J) - f|_{J|\z}(\x_J)| >
\gamma$. Let $B_n^l$ be the set of all bad restrictions, ie: 
$B_n^l = \{J|\z \in R_n^l: J|\z \mbox{ is bad}\}$. 
To prove the lemma, it is sufficient to prove that 
\begin{equation}
  \frac{|B_n^l|}{|R_n^l|} \leq \frac{1}{2} \left(\frac{l}{n-l} 8 u w\right)^t. 
\end{equation}

In the proof that follows, for every bad restriction $\rho \in B_n^l$,
we establish a bijection between $\rho$ and $(\xi, s)$, in which $\xi$
is a restriction in $R_n^{l-t}$ and $s$ is a certificate from a
witness set $A$. In this case, the number of distinct $\rho$'s is
bounded by the number of $(\xi, s)$ pairs: 
\begin{equation}
  |B_n^l| \leq |R_n^{l-t}| \cdot |A|.
\end{equation}

For a restriction $\rho$, we form the \textit{canonical decision
 tree} for $f|_\rho$ under precision $\gamma$ as follows: 
\begin{enumerate}
\item We start with a fixed order for the
  variables and another fixed order for the factors. 
\item If $f|_\rho$ is a constant function, or $||f|_\rho||_\infty \leq
  \gamma$, stop.
\item Otherwise, under restriction $\rho$, some factors evaluate to
  fixed values (all variables in these factors are fixed or there are
  free variables, but all assignments to these free variables lead to
  value 1), while other factors do not.  Examine the factors according
  to the fixed factor order until reaching the first factor that still
  does not evaluate to a fixed value.
\item Expand the open variables of this factor, under the fixed
  variable order specified in step 1. The result will be a tree (The
  root branch is for the first open variable. The branches in the
  next level is for the second open variable, etc).
\item Each leaf of this tree corresponds to $f|_{\rho\pi_1}$, in
  which $\pi_1$ is a value restriction for all open variables of the
  factor. Recursively apply step 2 to 5 for function
  $f|_{\rho\pi_1}$, until the condition in step 2 holds. Then attach
  the resulting tree to this leaf.
\end{enumerate}

Figure~\ref{fig:decision_tree} provides a graphical demonstration of a
canonical decision tree.

Now suppose restriction $\rho$ is bad. By definition, for any decision
tree of depth $t$, there exists at least one input $\x$, such that
$|h(\x) - f|_{\rho}(\x)| > \gamma$.
The canonical decision tree is no exception. Therefore, there must be
a path $l$ in the canonical decision tree of $f|_\rho$, which has more
than $t$ variables. Furthermore, these $t$ variables can be split into
$k$ ($1\leq k \leq t$) segments, each of which corresponds to one
factor.
Let $f_i$ ($i\in\{1,\dots,k\}$) be these factors, and let $\pi_i$ be
the assignments of the free variables for $f_i$ in path $l$. 
Now for each factor $f_i$, by the definition of the canonical decision
tree, under the restriction $\rho\pi_1\dots\pi_{i-1}$,
$f_i|\rho\pi_1\dots\pi_{i-1}$ must have a branch whose value is no
greater than $\eta$ (otherwise $f_i|\rho\pi_1\dots\pi_{i-1}$ all
evaluates to 1). We call this branch the ``compressing'' branch for factor $f_i|\rho\pi_1\dots\pi_{i-1}$. 
Let the variable assignment which leads to this compressing branch for
$f_i|\rho\pi_1\dots\pi_{i-1}$ be $\sigma_i$. Let $\sigma = \sigma_1
\ldots \sigma_k$. Then we map the bad restriction $\rho$ to $\rho
\sigma$ and an auxiliary advice string that we are going to describe.
\begin{figure}
  \centering
  \includegraphics[width=0.9\linewidth]{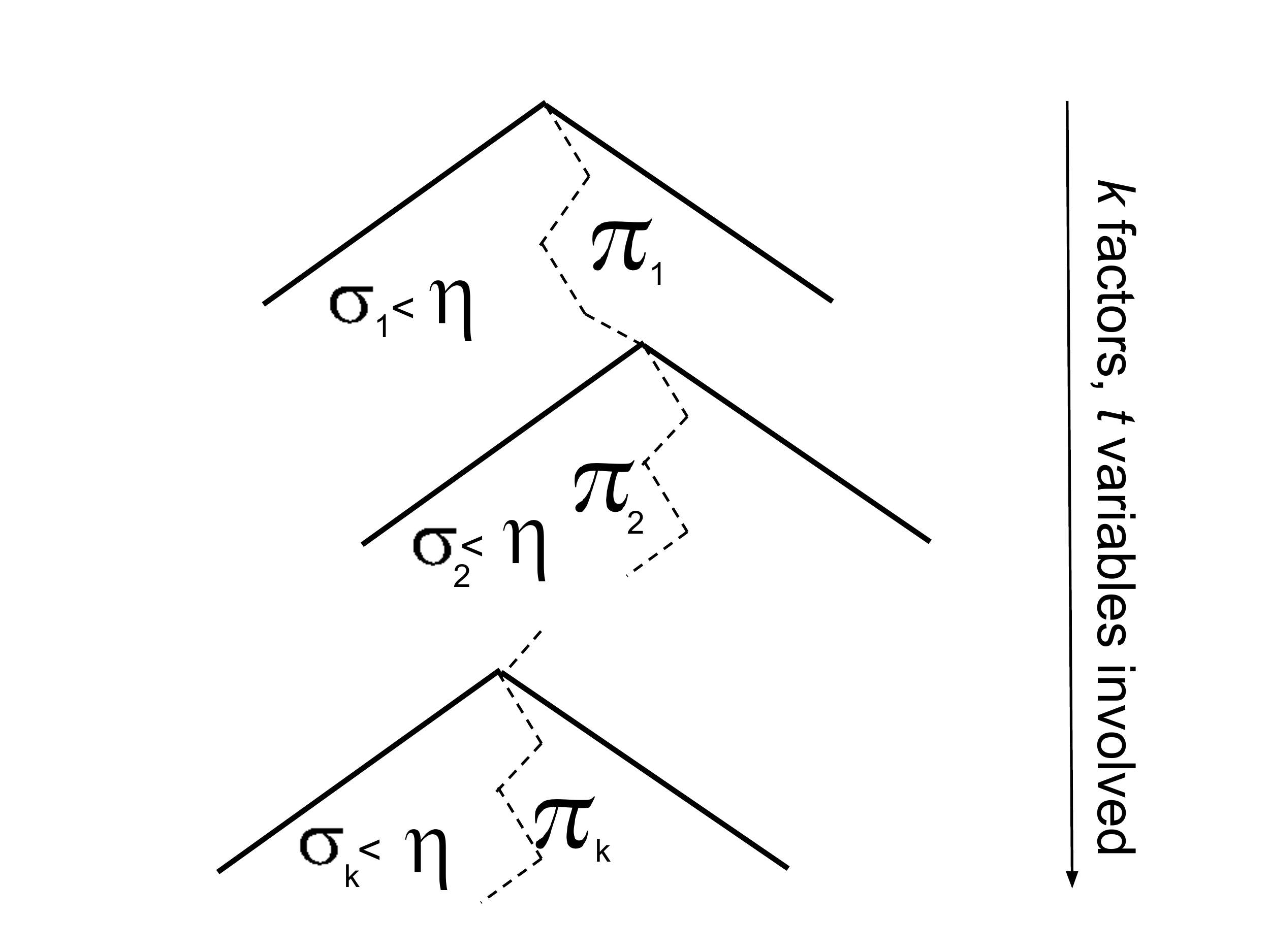}
  \caption{A graphical illustration of a canonical decision tree.}
  \label{fig:decision_tree}
\end{figure}

It is self-explanatory that we can map from any bad restriction $\rho$
to $\rho \sigma$. The auxiliary advice is used to establish the
backward mapping, i.e. the mapping from $\rho\sigma$ to $\rho$.
When we look at the result of $f|\rho\sigma$, we will notice that at
least one factor is set to its compressing branch (because we set
$f_1$ to its compressing branch in the forward mapping). 
Now there could be other factors set at their compressing branches
(because of $\rho$), but an important observation is that: \textit{the
  number of factors at their compressing branches cannot exceed
  $u=\lceil \log_\eta \gamma \rceil+1$}, because otherwise, the other
$u-1$ factors already render $||f|\rho||_\infty \leq \gamma$, and the
canonical decision tree should have stopped on expanding this branch.
We therefore could record the index number of $f_1$ out of all the
factors that are fixed at their compressing branches in the auxiliary
advice string, so we can find $f_1$ in the backward mapping. Notice
that this index number will be between 1 and $u$, so it takes $\log u$
bits to store it.

Now with the auxiliary information, we can identify which factor is
$f_1$. The next task is to identify which variables in $f_1$ are fixed
by $\rho$, and which are fixed by $\sigma_1$. Moreover, if one
variable is fixed by $\sigma_1$, we would like to know its correct
values in $\pi_1$. To do this, we introduce additional auxiliary
information: for each factor $f_i$, suppose it has $r_i$ free
variables under restriction $f_i|\rho\pi_1\dots\pi_{i-1}$, we use
$r_i$ integers to mark the indices of these free variables. Because each $f_i$ is of width at most $w$, every integer of this type is between 1 and $w$ (therefore can be stored in $\log w$ bits). Also, it requires $t$ integers of this type in total to keep this
information, because we have $t$ free variables in total for $f_1,
\ldots, f_k$.
%

Notice that it is not sufficient to keep these integers. We further
need $k-1$ separators, which tell which integer belongs to which factor 
$f_i$. Aligning these integers in a line, we need $k-1$ separators to
break the line into $k$ segments. These separators can be represented
by $t-1$ bits, in which the $i$-th bit is 1 if and only if there is a
separator between the $i$-th and ($i$+1)-th integer (we have $t$
integers at most).
With these two pieces of information, we are able to know the locations of free variables set by $\sigma_i$ for each factor $f_i$.

We further need to know the values for each variable in
$\pi_i$. Therefore, we add in another $t$-bit string, each bit is
either 0 or 1. 0 means the assignment of the corresponding variable in
$\pi_i$ is the same as the one in $\sigma_i$, 1 means the opposite.

With all this auxiliary information, we can start from $\rho \sigma$,
find the first factor $f_1$, further identify which variables are set
by $\sigma_1$ in $f_1$, and set back its values in $\pi_1$. Then we
start with $f|\pi_1$, we can find $\pi_2$ in the same process, and
continue. Finally, we will find all variables in $\sigma$ and back up
the original restriction $\rho$.

Now to count the length of the auxiliary information, the total length
is $t\log u + t\log w + 2t - 1$ bits. Therefore, we can have a
one-to-one mapping between elements in $B_n^l$ and $R_n^{l-t} \times
A$, in which the size of $A$ is bounded by $2^{t\log u + t\log w + 2t
  - 1} = (uw)^t \cdot 2^{2t-1}$.

In all, 
\begin{align}
  \frac{|B_n^l|}{|R_n^l|} &\leq \frac{{n \choose l-t} 2^{n-l+t} (uw)^t \cdot 2^{2t-1}}{{n \choose l} 2^{n-l}}\\
  &=\frac{{n \choose l-t} \frac{1}{2} (8uw)^t }{{n \choose l} }\\
  &=\frac{l(l-1)\ldots(l-t+1)}{(n-l+1)\ldots(n-l+t)} \frac{1}{2}(8uw)^t\\
  &\leq \frac{1}{2} \left(\frac{l}{n-l} 8 u w\right)^t. 
\end{align}

\subsection*{Proof of Theorem 3}

For each term in the Fourier expansion whose degree is less than or equal to $d$, we can treat this term as a weighted function involving less than or equal to $d$ variables. Therefore, it can be represented by a decision tree, in which each path of the tree involves no more than $d$ variables (therefore the tree is at most at the depth of $d$). Because $f$ is represented as the sum over a set of Fourier terms up to degree $d$, it can be also represented as the sum of the corresponding decision trees. 

\subsection*{Proof of Theorem 6}

Let the Fourier expansion of $f$ be: $f(\x)= \sum_S \hat{f}(S) \chi_S(\x)$, we have: 
\begin{align*}
&f'(\x \setminus x_i)\\
=&f(\x \setminus x_i, x_i=+1) + f(\x \setminus x_i, x_i=-1)\\
=&\sum_{S: i \in S} \hat{f}(S) \cdot \chi_{S\setminus i}(\x \setminus x_i) \cdot 1+\\
 &\sum_{S: i \not \in S} \hat{f}(S) \cdot \chi_{S\setminus i}(\x \setminus x_i)  \\
&  + \sum_{S: i \in S} \hat{f}(S) \cdot \chi_{S\setminus i}(\x \setminus x_i) \cdot (-1) + \\
 &\sum_{S: i \not \in S} \hat{f}(S) \cdot \chi_{S\setminus i}(\x \setminus x_i)\\
=&\sum_{S: i \in S} \hat{f}(S) \cdot \chi_{S\setminus i}(\x \setminus x_i) \cdot 1 + \\
&\sum_{S: i \in S} \hat{f}(S) \cdot \chi_{S\setminus i}(\x \setminus x_i) \cdot (-1)  \\
& + \sum_{S: i \not \in S} \hat{f}(S) \cdot \chi_{S\setminus i}(\x \setminus x_i) + \\
&\sum_{S: i \not \in S} \hat{f}(S) \cdot \chi_{S\setminus i}(\x \setminus x_i)\\
=& \sum_{S: i \not \in S} 2 \cdot \hat{f}(S) \cdot \chi_{S\setminus i}(\x \setminus x_i).
\end{align*}

\end{document}